\documentclass[11pt]{article}

\usepackage[T1]{fontenc}
\usepackage[utf8]{inputenc}
\usepackage{lmodern}
\usepackage[margin=1in]{geometry}
\usepackage{amsmath,amsfonts,amssymb,amsthm}
\usepackage{mathtools}
\usepackage{bbold}
\usepackage{xcolor}
\usepackage{booktabs}
\usepackage{placeins}
\usepackage{algorithm}
\usepackage{algpseudocode}
\usepackage[round,authoryear]{natbib}
\usepackage[hidelinks]{hyperref}
\usepackage[nameinlink,capitalize,noabbrev]{cleveref}
\usepackage{microtype}

\newcommand{\E}[1]{\mathbb{E}\left[#1\right]}
\newcommand{\Ex}[2]{\mathbb{E}_{#1}\left[#2\right]}

\renewcommand{\Pr}[1]{\mathbb{P}\left(#1\right)}

\newcommand{\Ind}[1]{\mathbb{1}\left[#1\right]}

\newcommand{\R}{\mathbb{R}}

\newcommand{\X}{\mathcal{X}}
\newcommand{\Y}{\mathcal{Y}}

\newcommand{\CEO}{\mathsf{E}}
\newcommand{\CEOd}{\CEO^{(d)}}
\newcommand{\kp}{\kappa}

\newcommand{\ourmethod}{\textbf{FSNM}}

\newcommand{\tr}[1]{\textrm{tr}\left\{ #1 \right\}}

\newcommand{\norm}[1]{\lVert #1 \rVert}

\newcommand{\independent}{\perp\!\!\!\perp}

\newtheorem{theorem}{Theorem}[section]
\newtheorem{proposition}[theorem]{Proposition}
\newtheorem{lemma}[theorem]{Lemma}

\newtheorem{assumption}[theorem]{Assumption}
\theoremstyle{definition}
\newtheorem{definition}[theorem]{Definition}
\theoremstyle{remark}

\renewenvironment{proof}[1][Proof]
  {\par\noindent\textbf{#1. }\ignorespaces}
  {\hfill$\blacksquare$\par\medskip}

\title{Learning Conditional Expectation Operators via Functional Newton Updates}

\author{
  Thiago R.~Ramos\thanks{Federal University of S\~ao Carlos. Email: \href{mailto:thiagorr@ufscar.br}{thiagorr@ufscar.br}.}
  \and
  Alek Fr\"{o}hlich\thanks{CSML, Istituto Italiano di Tecnologia and University of Genoa. Email: \href{mailto:alek.frohlich@iit.it}{alek.frohlich@iit.it}.}
  \and
  Daniel Perazzo\thanks{CSML, Istituto Italiano di Tecnologia and University of Genoa. Email: \href{mailto:daniel.rodrigues@iit.it}{daniel.rodrigues@iit.it}.}
  \and
  Massimiliano Pontil\thanks{CSML, Istituto Italiano di Tecnologia and University College London. Email: \href{mailto:massimiliano.pontil@iit.it}{massimiliano.pontil@iit.it}.}
}
\date{}

\begin{document}

\maketitle

\begin{abstract}
We introduce the Functional Spectral-Newton Method (FSNM) for learning the
leading singular structure of a conditional expectation operator without
fixing a basis or reproducing kernel Hilbert space. FSNM fits a low-rank
representation of the centered joint-to-product density ratio kernel by
alternating functional Newton updates. Each update reduces to a preconditioned
regression, which we approximate with vector-valued regression trees in a
stagewise boosting procedure. At the population level, we establish descent
and an $O(1/T)$ best-iterate block-stationarity rate under a relative
weak-learner accuracy condition, and show that every nondegenerate local
minimum over the full centered $L^2$ spaces is a globally optimal rank-$d$
approximation. Synthetic experiments show that FSNM recovers a low-rank
density ratio and its leading spectral structure, and that the same learned
kernel can answer multiple conditional queries without refitting.
\end{abstract}

\noindent\textbf{Keywords:} conditional expectation operators, density ratio
estimation, functional Newton methods, gradient boosting, spectral methods

\section{Introduction}
\label{sec:introduction}

Let $(X,Y)$ be a pair of random variables with joint distribution
$P_{X,Y}$. We define their conditional expectation operator as
\[
    \CEO:L^2(P_Y)\to L^2(P_X),
    \qquad
    \CEO(g)(x)=\E{g(Y)\mid X=x}.
\]
Learning $\CEO$, rather than a separate regression function for each choice of $g$, provides a reusable representation of the conditional law of $Y$ given $X$. For instance, taking $g$ to be an indicator function yields conditional probabilities, while taking $g$ to be the identity recovers the usual regression function; other choices yield conditional moments or distribution functions. This viewpoint has been exploited for conditional probability estimation and uncertainty quantification, both in general settings and in problems with known symmetries \citep{kostic2024ncp,ordonez-apraez2026representation}.
More broadly, conditional expectation operators describe
the evolution of observables in stochastic dynamical systems \citep{mardt2018vampnets,Klus2018,kostic2022koopman,kostic2023sharp,turri2026selfsupervised}, govern inverse problems arising in causal effect estimation \citep{wang2022spectral,sun2025spectral,
meunier2026outcomeaware}, characterize independence and conditional independence \citep{zhang_kernel-based_2012,frohlich2026toward}, and serve as world models for reinforcement learning \citep{novelli2024operator,zhang2022linear}.

Existing approaches to learning such operators typically use kernels or neural networks.
This raises a natural question:
\begin{quote}
\emph{Can the leading spectral structure of a conditional expectation operator be learned with regression trees, through a functional optimization procedure analogous to gradient boosting?}    
\end{quote}
Besides providing an alternative inductive bias, such a method would make operator learning available to the tree-based regression machinery commonly used for tabular and heterogeneous data.
In this work, we answer this question by deriving an alternating functional Newton method whose conditional expectation steps can be approximated by regression trees.

Our goal is to learn the leading spectral structure of $\CEO$ without
prescribing a basis for either $L^2$ space.
Throughout the paper, we assume that $P_{X,Y}$ is absolutely continuous with respect to $P_X\otimes P_Y$ and that the density ratio
\[
    \kp(x,y)
    =
    \frac{dP_{X,Y}}{d(P_X\otimes P_Y)}(x,y),
\]
belongs to $L^2(P_X\otimes P_Y)$.
Then $\CEO$ is a Hilbert--Schmidt integral operator with kernel $\kp$, and admits a singular value decomposition $\CEO=\sum_{i\geq0}\sigma_i\,\phi_i^\star\otimes\psi_i^\star$, with the
constant-one functions forming the first singular pair and $\sigma_0=1$.
The corresponding kernel expansion is
\[
    \kp(x,y)=1+\sum_{i\geq1}\sigma_i\phi_i^\star(x)\psi_i^\star(y).
\]
The unknown spectral structure is therefore carried by the centered density ratio kernel
\[
    \kp_0(x,y)\coloneqq \kp(x,y)-1
    =\sum_{i\geq1}\sigma_i\phi_i^\star(x)\psi_i^\star(y).
\]
Truncating this expansion to its leading $d$ terms motivates a
rank-$d$ model. We learn centered feature maps $\phi:\X\to\R^d$ and
$\psi:\Y\to\R^d$ such that
\[
    \kp_0(x,y)
    \approx
    \kp_{\phi,\psi}(x,y)
    \coloneqq
    \phi(x)^\top\psi(y),
    \qquad
    \E{\phi(X)}=\E{\psi(Y)}=0.
\]
The corresponding approximation of the full density ratio is
$1+\kp_{\phi,\psi}$.

At first sight, fitting this model appears to require knowing $\kp_0$.
The key observation is that its squared approximation error can instead be
written, up to an irrelevant constant, using a joint expectation over
observed pairs $(X,Y)$ and the marginal second moments of the two factors.
It can therefore be estimated directly from data.

To optimize this objective over flexible function classes, the
\emph{Functional Spectral-Newton Method} (\ourmethod) alternates functional
Newton steps for $\phi$ and $\psi$. For either factor, the functional
gradient is expressed through a conditional mean regression, while the
corresponding block Hessian is a finite-dimensional second-moment matrix.
Together, these estimates determine an approximate functional Newton
direction. The resulting boosting expansion accumulates weak regressors across
iterations
\citep{friedman2001greedy,grubb2011generalized,sigrist2021gradient,
zozoulenko2026gradient}.

Our main contributions are:
\begin{enumerate}
    \item Starting from a structured least-squares objective for low-rank approximation of the centered density ratio kernel, we derive its closed-form functional block-Newton directions. Each direction is a preconditioned conditional mean regression, which can be estimated by a generic regression learner without fixing a basis or an RKHS.

    \item We instantiate these updates with regression trees, obtaining an
    alternating stagewise procedure that accumulates weak regressors across
    iterations. We exploit the invariance of the factorization to obtain a
    balanced spectral representation of the fitted low-rank model, with
    orthonormal factors and explicit scale coefficients.

    \item We analyze a population version of the optimization procedure. Under
    uniformly nondegenerate iterates and a uniform relative
    tree-regression accuracy
    condition, the population loss decreases and a best-iterate
    block-stationarity measure has an $O(1/T)$ rate. Separately, over the full
    centered $L^2$ spaces, every nondegenerate local minimum of the population
    objective is a globally optimal rank-$d$ approximation.
    For the final fitted class, Rademacher complexity gives uniform
    concentration of the empirical loss around its population counterpart.
\end{enumerate}

\subsection{Related work}
\label{sec:related_work}

\paragraph{Learning conditional expectation operators.}
Conditional mean embeddings represent the conditional distribution of $Y$
given $X=x$ by its mean element in a reproducing kernel Hilbert space (RKHS),
or, equivalently, through an operator between RKHSs. They thereby allow the
conditional expectation of any function in the output RKHS to be evaluated
from a single learned representation; see \citet{muandet2017kernel} for a
comprehensive review. Recent work has learned reusable representations of
conditional laws more directly. Neural Conditional Probability uses an
operator-theoretic neural representation to recover conditional
probabilities, quantiles, moments, and confidence regions from a single fit
\citep{kostic2024ncp}, while related representations incorporate known
symmetries into statistical inference \citep{ordonez-apraez2026representation}.
Spectral features of conditional expectation operators have also been
used for the econometric and causal inference problems discussed above
\citep{wang2022spectral,sun2025spectral,
meunier2026outcomeaware,frohlich2026toward}. These methods, together
with conditional mean embeddings, share the goal of learning a reusable
representation of the conditional expectation operator, but typically fix an
RKHS or a neural parameterization, or tailor the objective to a particular
inferential problem. We instead target the leading singular system of
$\CEO$ directly in the ambient centered $L^2$ spaces, without choosing a
basis or RKHS, using a two-sided low-rank factorization fitted with
regression trees.

Answering several conditional queries from a single fit is
consequently not unique to \ourmethod: conditional mean embeddings and
Neural Conditional Probability share it by construction. What
distinguishes \ourmethod\ is how the representation is obtained and
analyzed: a closed-form, learner-agnostic block-Newton derivation
instantiated with regression trees, together with a matching convergence
and population landscape analysis for the resulting tree-boosting
procedure.

\paragraph{Density ratio estimation.}
The kernel $\kp$ is itself a density ratio, and estimating such ratios
directly, rather than the two densities separately, is a well-studied
problem. Classical approaches minimize a convex importance-fitting
objective, such as an unconstrained least-squares criterion, to obtain a
pointwise estimate of the ratio \citep{kanamori2009least}; see
\citet{sugiyama2012density} for a comprehensive treatment. These methods
target the full ratio $\kp$ directly and do not expose a low-rank or
spectral structure. A spectral series alternative uses eigenfunctions
of data-adapted kernel operators \citep{izbicki2014high}. We instead
estimate $\kp$ through a rank-$d$
factorization of its centered part $\kp_0$, which recovers the leading
singular functions and values of $\CEO$ and lets the same fitted kernel
answer multiple conditional queries without refitting.

\paragraph{Canonical correlation analysis.}
Canonical correlation analysis (CCA) seeks pairs of transformations of two
views whose outputs are maximally correlated \citep{hotelling1936relations}.
Nonlinear extensions include
kernel CCA, which restricts the transformations to RKHSs
\citep{bach2002kernel,fukumizu2007consistency}, and Deep CCA, which jointly
learns them with neural networks \citep{andrew2013deep}. Over unrestricted
centered $L^2$ spaces, these optimal transformations are the left and right
singular functions of the conditional expectation operator; related notions
include principal functions and principal inertia components
\citep{calmon2017principal,painsky2020nonlinear}. The rank-one case is also
closely related to ACE, which alternates conditional mean regressions to find
maximally correlated transformations \citep{breiman1985estimating}. Our
method instead starts from a joint rank-$d$ least-squares factorization of
{ the centered density ratio kernel}. Its conditional mean regressions arise
as functional block-Newton directions, can be fitted with trees, and jointly
produce the singular functions and their scale coefficients.

\paragraph{Functional gradient and Newton boosting.}
Boosting admits a classical interpretation as stagewise optimization in
function space, in which weak learners approximate functional descent
directions \citep{friedman2001greedy,grubb2011generalized}. Newton boosting
incorporates second-order information into this construction
\citep{sigrist2021gradient,zozoulenko2026gradient}. This second-order
approach underlies widely used gradient boosting systems such as XGBoost
\citep{chen2016xgboost} and LightGBM \citep{ke2017lightgbm}, which fit trees
to a per-example Newton step of a scalar loss. These methods typically
optimize a predictive loss for a scalar or finite-dimensional response.
\ourmethod\ instead optimizes a bilinear operator factorization: the block
Hessian is the opposite factor's $d\times d$ second-moment matrix, and
the Newton best response is a preconditioned conditional expectation on
the opposite domain. A fitted weak regressor approximates this best response
and thereby induces an approximate operator-update direction.

The remainder of the paper is organized as follows.
Section~\ref{sec:background} introduces the operator, its spectral
decomposition, and the low-rank population objective.
Section~\ref{sec:method} develops \ourmethod\ and its
spectral normalization,
Section~\ref{sec:theoretical_guarantees} presents the convergence,
population-landscape, and finite-sample guarantees, and
Section~\ref{sec:experiments} reports our
empirical evaluation.

\section{Background}
\label{sec:background}

Let $(X,Y)$ be a pair of random variables with joint distribution $P_{X,Y}$
and marginals $P_X$ and $P_Y$. Throughout, we assume access to an i.i.d.\
sample of size $n$,
\[
    \mathcal{D}_n=\{(X_i,Y_i)\}_{i=1}^n,
    \qquad
    (X_i,Y_i)\stackrel{\mathrm{i.i.d.}}{\sim}P_{X,Y}.
\]
A central problem in statistics and machine learning is to characterize
$P_{X,Y}$ beyond its marginals. A natural object for this purpose is the
conditional expectation operator: knowing its action gives access not only
to the conditional mean, but also to conditional distributions and other
conditional quantities.

\begin{definition}[Conditional Expectation Operator]\label{def:ceo}
Define the conditional expectation operator $\CEO:L^2(P_Y) \to L^2(P_X)$ by
\[
    \CEO(g)(x) \coloneqq \E{g(Y)\,\vert X=x}.
\]
\end{definition}

Here $L^2(P_X)$ and $L^2(P_Y)$ are the usual Hilbert spaces of
square-integrable functions, with inner products
\[
    \langle u_1,u_2\rangle_{L^2(P_X)}\coloneqq\Ex{P_X}{u_1(X)u_2(X)},
    \qquad
    \langle v_1,v_2\rangle_{L^2(P_Y)}\coloneqq\Ex{P_Y}{v_1(Y)v_2(Y)},
\]
for $u_1,u_2\in L^2(P_X)$ and $v_1,v_2\in L^2(P_Y)$, and induced norms
$\|u\|_{L^2(P_X)}\coloneqq\langle u,u\rangle_{L^2(P_X)}^{1/2}$ and
$\|v\|_{L^2(P_Y)}\coloneqq\langle v,v\rangle_{L^2(P_Y)}^{1/2}$.

Rather than learning a separate regression function for each choice of $g$,
we aim to learn the spectral decomposition of $\CEO$. Throughout, we assume
that $\CEO$ is compact and has finite Hilbert--Schmidt norm. For
$u\in L^2(P_X)$ and $v\in L^2(P_Y)$, let $u\otimes v$ denote the rank-one
operator
\[
    (u\otimes v)g
    \coloneqq \langle g,v\rangle_{L^2(P_Y)}u.
\]
Under this assumption, $\CEO$ admits a singular value decomposition.
Writing $\CEO^*:L^2(P_X)\to L^2(P_Y)$ for its adjoint,
\[
    \CEO^*(f)(y)\coloneqq\E{f(X)\mid Y=y},
\]
there exist a non-increasing sequence of nonnegative numbers
$\sigma_0\geq\sigma_1\geq\sigma_2\geq\cdots\geq0$ and orthonormal systems
$\{\phi_i^\star\}\subset L^2(P_X)$, $\{\psi_i^\star\}\subset L^2(P_Y)$ such
that, for every $i$,
\[
    \CEO\psi_i^\star=\sigma_i\phi_i^\star,
    \qquad
    \CEO^*\phi_i^\star=\sigma_i\psi_i^\star.
\]
We call $(\phi_i^\star,\psi_i^\star)$ a singular pair of $\CEO$ with
singular value $\sigma_i$; the systems $\{\phi_i^\star\}$ and
$\{\psi_i^\star\}$ are its left and right singular functions, respectively.
Equivalently,
\[
    \CEO
    =\sum_{i=0}^{\infty}
      \sigma_i\,\phi_i^\star\otimes\psi_i^\star.
\]

We now connect this operator decomposition to the joint distribution.
Assume that $P_{X,Y}\ll P_X\otimes P_Y$, and define
\[
    \kp(x,y)
    \coloneqq
    \frac{dP_{X,Y}}{d(P_X\otimes P_Y)}(x,y).
\]
Throughout, we assume that $\kp\in L^2(P_X\otimes P_Y)$. By a change of
measure, the action of $\CEO$ can be written as
\[
    \CEO(g)(x)
    =\int g(y)\kp(x,y)\,dP_Y(y).
\]
On the other hand, expanding the tensor products in the singular value
decomposition gives
\[
    \CEO(g)(x)
    =\int g(y)
      \left(
        \sum_{i=0}^{\infty}
        \sigma_i\phi_i^\star(x)\psi_i^\star(y)
      \right)dP_Y(y).
\]
Comparing the two expressions identifies the kernel of $\CEO$:
\[
    \kp(x,y)
    =\sum_{i=0}^{\infty}
      \sigma_i\phi_i^\star(x)\psi_i^\star(y),
\]
with convergence in $L^2(P_X\otimes P_Y)$.

The first singular component is known. Let $\mathbf{1}_X\in
L^2(P_X)$ and $\mathbf{1}_Y\in L^2(P_Y)$ denote the constant-one functions;
both have unit norm since $P_X$ and $P_Y$ are probability measures.
Conditional expectation is a contraction: by Jensen's inequality applied
conditionally on $X$,
\[
    \CEO(g)(x)^2
    =\E{g(Y)\mid X=x}^2
    \leq\E{g(Y)^2\mid X=x},
\]
and taking expectation over $X\sim P_X$ with the tower property gives
\[
    \|\CEO(g)\|_{L^2(P_X)}^2\leq\|g\|_{L^2(P_Y)}^2
    \quad\text{for every }g\in L^2(P_Y),
\]
so every singular value of $\CEO$ satisfies $\sigma_i\leq1$. Moreover,
\begin{align*}
    \CEO(\mathbf{1}_Y)(x)=\mathbf{1}_X(x),\qquad \CEO^*(\mathbf{1}_X)(y)=\mathbf{1}_Y(y),
\end{align*}
so $(\mathbf{1}_X,\mathbf{1}_Y)$ is a singular pair of $\CEO$ with value
$1$. Since no singular value exceeds $1$, this is the largest one, and we
may take $\sigma_0=1$, $\phi_0^\star=\mathbf{1}_X$, and
$\psi_0^\star=\mathbf{1}_Y$.
{ By orthogonality of the singular systems, every remaining singular
function is orthogonal to the corresponding constant-one function. Hence,
for $i\geq1$,
\[
    \Ex{P_X}{\phi_i^\star(X)}
    =\langle\phi_i^\star,\mathbf{1}_X\rangle_{L^2(P_X)}=0,
    \qquad
    \Ex{P_Y}{\psi_i^\star(Y)}
    =\langle\psi_i^\star,\mathbf{1}_Y\rangle_{L^2(P_Y)}=0.
\]
Thus, the nonconstant singular functions are centered, and} the two spectral
representations become
\[
    \CEO
    =\mathbf{1}_X\otimes\mathbf{1}_Y
      +\sum_{i=1}^{\infty}
       \sigma_i\,\phi_i^\star\otimes\psi_i^\star,
    \quad
    \kp(x,y)
    =1+\sum_{i=1}^{\infty}
      \sigma_i\phi_i^\star(x)\psi_i^\star(y).
\]
{ Define the centered spaces
\[
    L_0^2(P_X)\coloneqq
    \{f\in L^2(P_X):\Ex{P_X}{f(X)}=0\},
    \qquad
    L_0^2(P_Y)\coloneqq
    \{g\in L^2(P_Y):\Ex{P_Y}{g(Y)}=0\},
\]
and the centered density ratio kernel
\[
    \kp_0(x,y)\coloneqq\kp(x,y)-1.
\]
The marginal identities of a density ratio imply
\[
    \Ex{P_Y}{\kp_0(x,Y)}=0
    \quad\text{for $P_X$-a.e. }x,
    \qquad
    \Ex{P_X}{\kp_0(X,y)}=0
    \quad\text{for $P_Y$-a.e. }y.
\]
Hence $\kp_0$ lies in the centered tensor-product subspace and has the
spectral representation
\[
    \kp_0(x,y)
    =\sum_{i=1}^{\infty}
      \sigma_i\phi_i^\star(x)\psi_i^\star(y).
\]
Equivalently, the unknown part of the conditional expectation operator is
the centered operator
\[
    \CEO_0\coloneqq
    \CEO-\mathbf{1}_X\otimes\mathbf{1}_Y:
    L^2(P_Y)\to L^2(P_X),
\]
whose kernel is $\kp_0$. It annihilates constants and maps
$L_0^2(P_Y)$ into $L_0^2(P_X)$. Thus, learning the nonconstant singular
structure of $\CEO$ is equivalent to learning a separable decomposition of
$\kp_0$.}

The spectral representation also characterizes the optimal finite-rank
approximation.

{
\begin{theorem}[Eckart--Young--Mirsky]\label{thm:optimal_truncation}
For $d\geq0$, define
\[
    \CEO_0^{(d)}
    \coloneqq
      \sum_{i=1}^{d}
       \sigma_i\,\phi_i^\star\otimes\psi_i^\star,
    \qquad
    \kp_0^{(d)}(x,y)
    \coloneqq \sum_{i=1}^{d}
       \sigma_i\phi_i^\star(x)\psi_i^\star(y).
\]
Then $\CEO_0^{(d)}$ and $\kp_0^{(d)}$ are best rank-$d$
approximations of $\CEO_0$ and $\kp_0$, respectively, and
\[
    \|\CEO_0-\CEO_0^{(d)}\|_{\mathrm{HS}}^2
    =
    \|\kp_0-\kp_0^{(d)}\|_{L^2(P_X\otimes P_Y)}^2
    =\sum_{i>d}\sigma_i^2.
\]
Adding the known constant component gives
\[
    \CEOd
    =\mathbf{1}_X\otimes\mathbf{1}_Y+\CEO_0^{(d)},
    \qquad
    \kp^{(d)}=1+\kp_0^{(d)},
\]
which are the corresponding approximations of the full operator and its
density ratio kernel.
\end{theorem}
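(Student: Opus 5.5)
The plan is to reduce the statement to the classical Eckart--Young--Mirsky theorem for compact operators between Hilbert spaces. The first step is the isometry between Hilbert--Schmidt operators and square-integrable kernels. For an integral operator $T:L^2(P_Y)\to L^2(P_X)$ with kernel $k\in L^2(P_X\otimes P_Y)$, I would verify $\|T\|_{\mathrm{HS}}^2=\|k\|_{L^2(P_X\otimes P_Y)}^2$. To do this, expand $k$ in a product orthonormal basis $\{e_j(x)f_l(y)\}$ and use Parseval on both sides; the coefficients are $\langle Tf_l,e_j\rangle$. This identification is linear and rank-preserving. A rank-$d$ kernel $\sum_{i\le d}a_i(x)b_i(y)$ corresponds exactly to a rank-$d$ operator $\sum_{i\le d}a_i\otimes b_i$. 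So the two best-approximation problems, and their optimal values, coincide. Applied to $\CEO_0-\CEO_0^{(d)}$, whose kernel is $\kp_0-\kp_0^{(d)}$, this gives the first equality.

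Next I prove optimality at the operator level. The value $\|\CEO_0-\CEO_0^{(d)}\|_{\mathrm{HS}}^2=\sum_{i>d}\sigma_i^2$ follows by computing in the singular systems: the difference is $\sum_{i>d}\sigma_i\phi_i^\star\otimes\psi_i^\star$, and orthonormality gives the sum of squares. For the lower bound, take any operator $B$ of rank at most $d$. I would use Weyl's inequality for singular values, $s_{i+j-1}(A+B')\le s_i(A)+s_j(B')$, with $A=\CEO_0-B$ and $B'=B$. Since $s_{d+1}(B)=0$, this gives $s_{i+d}(\CEO_0)\le s_i(\CEO_0-B)$ for all $i\ge1$. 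Here the singular values of $\CEO_0$ are $\sigma_1\ge\sigma_2\ge\cdots$, so $s_{i+d}(\CEO_0)=\sigma_{i+d}$. Squaring and summing over $i$ yields $\|\CEO_0-B\|_{\mathrm{HS}}^2\ge\sum_{i>d}\sigma_i^2$.

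I also need that the singular values of $\CEO_0$ are indeed $\sigma_1,\sigma_2,\dots$. Since $\CEO_0=\sum_{i\ge1}\sigma_i\phi_i^\star\otimes\psi_i^\star$ with orthonormal systems, this is its SVD.

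Weyl's inequality is the main technical input. I would prove it via the min–max characterization $s_k(T)=\min_{\mathrm{rank}\,R<k}\|T-R\|_{\mathrm{op}}$, which holds for compact $T$. Given near-optimal $R_1$ for $A$ at level $i$ and $R_2$ for $B'$ at level $j$, the operator $R_1+R_2$ has rank less than $i+j-1$. Then $\|A+B'-R_1-R_2\|_{\mathrm{op}}\le s_i(A)+s_j(B')$, which gives the inequality. The min–max formula itself follows by restricting $T$ to the span of the top $k$ right singular functions, which meets $\ker R$ nontrivially. An alternative route avoids Weyl altogether: write $\|\CEO_0-B\|_{\mathrm{HS}}^2\ge\|\CEO_0\|_{\mathrm{HS}}^2-\|P\CEO_0\|_{\mathrm{HS}}^2$, where $P$ is the orthogonal projection onto the range of $B$. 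Then bound $\|P\CEO_0\|_{\mathrm{HS}}^2=\operatorname{tr}(P\CEO_0\CEO_0^*P)\le\sum_{i\le d}\sigma_i^2$ by Ky Fan's maximum principle. I would use whichever is cleaner, probably the projection argument.

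Finally, the statements about the full operator follow by adding the known constant component. From the decompositions established before the theorem, $\CEO=\mathbf{1}_X\otimes\mathbf{1}_Y+\CEO_0$ and $\kp=1+\kp_0$. Adding $\mathbf{1}_X\otimes\mathbf{1}_Y$ to $\CEO_0^{(d)}$ therefore gives the displayed $\CEOd$ and $\kp^{(d)}$, and the approximation errors are unchanged, since $\CEO-\CEOd=\CEO_0-\CEO_0^{(d)}$.
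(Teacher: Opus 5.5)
Your proposal is correct. The paper does not prove this statement. It quotes it as the classical Eckart--Young--Mirsky theorem and takes from the preceding setup the fact that $\CEO_0$ is the Hilbert--Schmidt integral operator with kernel $\kp_0$. So there is no argument in the paper to match step by step, and your proof supplies exactly what the paper leaves implicit.

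The key piece you add is the isometry $\|T\|_{\mathrm{HS}}=\|k\|_{L^2(P_X\otimes P_Y)}$. You pair it with the correspondence between kernels $\sum_{i\le d}a_i(x)b_i(y)$ and operators $\sum_{i\le d}a_i\otimes b_i$ of rank at most $d$. This is what makes the operator and kernel problems, and the first displayed equality, the same. Two small remarks on this step:
\begin{itemize}
\item Completeness of the product basis is standard. You can also avoid it by applying Parseval to $k(x,\cdot)$ for a.e.\ $x$ and then integrating in $x$.
\item Optimality among all rank-$d$ kernels also covers the paper's centered class, since $\kp_0^{(d)}$ itself has centered factors.
\end{itemize}

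For the lower bound, both of your routes work. The Weyl/min--max route is self-contained as you sketched it, and as a bonus it gives the operator-norm version $\min_{\operatorname{rank}R\le d}\|\CEO_0-R\|_{\mathrm{op}}=\sigma_{d+1}$.

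The projection route can be made fully elementary without citing Ky Fan. Since $\CEO_0=\sum_{i\ge1}\sigma_i\phi_i^\star\otimes\psi_i^\star$ with $\{\psi_i^\star\}$ orthonormal, you get $\|P\CEO_0\|_{\mathrm{HS}}^2=\sum_i\sigma_i^2w_i$, where $w_i=\|P\phi_i^\star\|^2\in[0,1]$. These weights satisfy $\sum_i w_i\le\|P\|_{\mathrm{HS}}^2=\operatorname{rank}P\le d$. Because the $\sigma_i^2$ are nonincreasing, it follows that $\sum_i\sigma_i^2w_i\le\sum_{i\le d}\sigma_i^2$. This is the shortest complete version of your argument.
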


Thus, learning the first $d$ unknown singular components is exactly the
problem of finding the best rank-$d$ approximation of $\CEO_0$, or
equivalently of $\kp_0$.
}


{
Theorem~\ref{thm:optimal_truncation} identifies the spectral target.
We now parameterize this rank-$d$ approximation and derive a population
objective that can be expressed through observable expectations. In
particular, the optimal rank-$d$ truncation of the centered density ratio
kernel is
\[
    \kp_0^{(d)}(x,y)
    =\sum_{i=1}^d
      \sigma_i\phi_i^\star(x)\psi_i^\star(y).
\]
Rather than estimating the singular values and normalized singular functions
separately, we absorb each coefficient into the corresponding pair of
factors. In particular, setting
\[
    \phi_i=\sqrt{\sigma_i}\phi_i^\star,
    \qquad
    \psi_i=\sqrt{\sigma_i}\psi_i^\star
\]
rewrites the spectral truncation without explicit coefficients. This
motivates the rank-$d$ model
\[
    \kp_{\phi,\psi}(x,y)
    \coloneqq
    \sum_{i=1}^d\phi_i(x)\psi_i(y)
    =\phi(x)^\top\psi(y),
\]
where
$
    \phi\in L_0^2(P_X)^d,
    \,
    \psi\in L_0^2(P_Y)^d.
$
Thus each coordinate of both factors is centered.

For centered function classes
$\Phi\subseteq L_0^2(P_X)^d$ and
$\Psi\subseteq L_0^2(P_Y)^d$, the learning problem is
\[
    \min_{\phi\in\Phi,\,\psi\in\Psi}
    \|\kp_0-\kp_{\phi,\psi}\|_{L^2(P_X\otimes P_Y)}^2.
\]

Although $\kp_0$ is unknown, its relation to the density ratio allows the
approximation error to be expressed using observable expectations.

\begin{proposition}\label{prop:population_loss}
Let
$
    \Sigma_\phi\coloneqq\Ex{P_X}{\phi(X)\phi(X)^\top},
    \Sigma_\psi\coloneqq\Ex{P_Y}{\psi(Y)\psi(Y)^\top},
$
and define
\begin{align*}
    A(\phi,\psi)
    &\coloneqq \Ex{P_{X,Y}}{\phi(X)^\top\psi(Y)},\\
    C(\phi,\psi)
    &\coloneqq
        \Ex{P_X\otimes P_Y}{\bigl(\phi(X)^\top\psi(Y)\bigr)^2}=\tr{\Sigma_\phi\Sigma_\psi},
\end{align*}

Then, the loss function can be written as
\[
    \|\kp_0-\kp_{\phi,\psi}\|_{L^2(P_X\otimes P_Y)}^2
    =
    \|\kp_0\|_{L^2(P_X\otimes P_Y)}^2
    +C(\phi,\psi)-2A(\phi,\psi).
\]
Consequently, the approximation problem is equivalent to minimizing
\[
    L(\phi,\psi)
    \coloneqq C(\phi,\psi)-2A(\phi,\psi)
    =\tr{\Sigma_\phi\Sigma_\psi}
      -2\Ex{P_{X,Y}}{\phi(X)^\top\psi(Y)}.
\]
\end{proposition}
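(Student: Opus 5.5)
We expand the square in $L^2(P_X\otimes P_Y)$:
\[
    \|\kp_0-\kp_{\phi,\psi}\|^2
    =\|\kp_0\|^2+\|\kp_{\phi,\psi}\|^2-2\langle\kp_0,\kp_{\phi,\psi}\rangle .
\]
We then identify the last two terms with $C$ and $2A$. First we check that every term is finite. Since $\phi\in L_0^2(P_X)^d$ and $\psi\in L_0^2(P_Y)^d$, each product $\phi_i(x)\psi_i(y)$ lies in $L^2(P_X\otimes P_Y)$ with norm $\|\phi_i\|\|\psi_i\|$. Hence $\kp_{\phi,\psi}\in L^2(P_X\otimes P_Y)$, and by assumption $\kp_0$ is in this space as well.

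For the quadratic term we write $(\phi(x)^\top\psi(y))^2=\phi(x)^\top\psi(y)\psi(y)^\top\phi(x)$. Because $X$ and $Y$ are independent under the product measure, Fubini gives
\[
    \Ex{P_X\otimes P_Y}{(\phi(X)^\top\psi(Y))^2}
    =\Ex{P_X}{\phi(X)^\top\Sigma_\psi\phi(X)}
    =\tr{\Sigma_\psi\Sigma_\phi}.
\]
The second equality uses the trace trick and linearity of the expectation. This yields $C(\phi,\psi)=\tr{\Sigma_\phi\Sigma_\psi}$, which is the identity asserted in the definition of $C$.

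For the cross term, the key step is the change of measure $\kp=dP_{X,Y}/d(P_X\otimes P_Y)$. Applied to the integrable function $h=\kp_{\phi,\psi}$, it gives
\[
    \langle\kp,\kp_{\phi,\psi}\rangle
    =\Ex{P_X\otimes P_Y}{\kp(X,Y)h(X,Y)}
    =\Ex{P_{X,Y}}{\phi(X)^\top\psi(Y)}
    =A(\phi,\psi).
\]
Two facts need checking here. The product $\kp h$ is integrable by Cauchy--Schwarz, since $\kp,h\in L^2$, so the Radon--Nikodym identity applies. Consequently $h\in L^1(P_{X,Y})$ and $A$ is well defined. We then subtract the constant. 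By Fubini and centering,
\[
    \langle 1,\kp_{\phi,\psi}\rangle=\sum_i\E{\phi_i(X)}\,\E{\psi_i(Y)}=0,
\]
so $\langle\kp_0,\kp_{\phi,\psi}\rangle=A(\phi,\psi)$. This centering step is the only place where the constraint $\phi,\psi$ centered is used, and it is the point that needs the most care; without it an extra term $\E{\phi}^\top\E{\psi}$ would appear.

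Substituting both identities into the expansion gives the stated formula. Since $\|\kp_0\|^2$ does not depend on $(\phi,\psi)$, minimizing the approximation error over $\Phi\times\Psi$ is equivalent to minimizing $L=C-2A$: the two problems have the same minimizers, and their optimal values differ by exactly this constant.
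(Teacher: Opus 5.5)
Your proposal is correct and follows essentially the same route as the paper. Like the paper, you expand the square, evaluate the cross term through the change of measure with $\kp_0=\kp-1$ and centering, and obtain $C=\tr{\Sigma_\phi\Sigma_\psi}$ from the trace identity and independence under $P_X\otimes P_Y$, while your integrability checks (Cauchy--Schwarz for $\kp\,\kp_{\phi,\psi}$, which makes $A$ well defined) supply detail the paper leaves implicit.
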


A proof is provided in Appendix~\ref{app:proofs_model_objective}.
}

\section{The Functional Spectral-Newton method}\label{sec:method}

{ Minimizing $L$ over $\phi\in L_0^2(P_X)^d$ and
$\psi\in L_0^2(P_Y)^d$ is an infinite-dimensional optimization problem.}
Nevertheless, its low-rank factorization gives it a useful block structure:
with one factor fixed, the loss is quadratic in the other. This motivates
our central idea, that is, a functional Newton method that alternates between the two
factors, taking a Newton step in function space for each block.

To make this structure explicit, we first compute the functional gradients
and block Hessians of $L$. These derivatives will show that each block update
is a conditional expectation regression, preconditioned by a second-moment
matrix, and can therefore be estimated from the sample $\mathcal{D}_n$.
We then derive the resulting closed-form block updates and introduce a
spectral normalization to resolve the non-uniqueness of the factorization.
We conclude with the practical finite-sample implementation used in our
experiments.

\subsection{Functional Newton framework}\label{sec:functional_calculus}

%

{ We introduce the following quantities, which appear throughout the analysis:
\begin{gather}
   m_\psi(x) \coloneqq
      \E{\psi(Y)\mid X=x},
   \quad
   m_\phi(y) \coloneqq
      \E{\phi(X)\mid Y=y},
   \label{eq:conditional_moments}\\
   \Sigma_\psi \coloneqq \Ex{P_Y}{\psi(Y)\psi(Y)^\top},
   \quad
   \Sigma_\phi \coloneqq \Ex{P_X}{\phi(X)\phi(X)^\top}.
   \label{eq:second_moments}
\end{gather}
Because the factors are centered, the tower property gives
$\E{m_\psi(X)}=\E{m_\phi(Y)}=0$, so both conditional mean functions belong
to the corresponding centered spaces.}

We can now state the main result of this section.

{
\begin{proposition}
\label{prop:derivatives}
With $m_\psi, m_\phi, \Sigma_\psi, \Sigma_\phi$
as in~\eqref{eq:conditional_moments}--\eqref{eq:second_moments}, the
functional gradients of $L$ are
\[
    \nabla_\phi L(\phi,\psi)(x) = 2\,\Sigma_{\psi}\,\phi(x) - 2\,m_\psi(x),
    \quad
    \nabla_\psi L(\phi,\psi)(y) = 2\,\Sigma_{\phi}\,\psi(y) - 2\,m_\phi(y).
\]
The diagonal Hessian blocks are the constant multiplication operators
\[
    \nabla^2_{\phi\phi}L(\phi,\psi)=2\Sigma_\psi,
    \quad
    \nabla^2_{\psi\psi}L(\phi,\psi)=2\Sigma_\phi.
\]
\end{proposition}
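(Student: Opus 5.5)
The plan is to compute Gateaux derivatives of $L(\phi,\psi)=\tr{\Sigma_\phi\Sigma_\psi}-2\Ex{P_{X,Y}}{\phi(X)^\top\psi(Y)}$ from Proposition~\ref{prop:population_loss}. Gradients are taken with respect to the inner product on $L^2(P_X)^d$ given by $\langle u,v\rangle=\Ex{P_X}{u(X)^\top v(X)}$, and analogously on $L^2(P_Y)^d$. Fix $\psi$, take a direction $h\in L_0^2(P_X)^d$, and expand $L(\phi+th,\psi)$ exactly in $t$. Because $L$ is a polynomial of degree two in $\phi$ for fixed $\psi$, the expansion terminates, which gives the gradient and the diagonal Hessian block at once without limiting arguments.

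\textbf{First step: the quadratic term.} We have $\Sigma_{\phi+th}=\Sigma_\phi+t\bigl(\E{\phi h^\top}+\E{h\phi^\top}\bigr)+t^2\Sigma_h$. Using the symmetry of $\Sigma_\psi$ and the cyclicity of the trace,
\[
\tr{\Sigma_{\phi+th}\Sigma_\psi}=\tr{\Sigma_\phi\Sigma_\psi}+2t\,\Ex{P_X}{h(X)^\top\Sigma_\psi\phi(X)}+t^2\,\Ex{P_X}{h(X)^\top\Sigma_\psi h(X)}.
\]

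\textbf{Second step: the cross term.} By the tower property,
\[
\Ex{P_{X,Y}}{h(X)^\top\psi(Y)}=\Ex{P_X}{h(X)^\top m_\psi(X)}.
\]
Here $m_\psi\in L^2(P_X)^d$ follows from the contraction property of $\CEO$ established earlier. Combining the two steps,
\[
L(\phi+th,\psi)=L(\phi,\psi)+t\,\langle h,\,2\Sigma_\psi\phi-2m_\psi\rangle+\tfrac{t^2}{2}\langle h,\,2\Sigma_\psi h\rangle.
\]
The linear coefficient identifies $\nabla_\phi L$. The quadratic coefficient identifies $\nabla^2_{\phi\phi}L$ as the bounded self-adjoint multiplication operator $h\mapsto 2\Sigma_\psi h$, which is constant in $x$.

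\textbf{Third step: the $\psi$ block.} The computation is symmetric, with $X$ and $Y$ exchanged, $\Sigma_\phi$ in place of $\Sigma_\psi$, and $m_\phi$ in place of $m_\psi$.

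\textbf{Main subtlety: the centering constraint.} Gradients on $L_0^2$ should lie in $L_0^2$. We have $\E{\Sigma_\psi\phi(X)}=0$ since $\phi$ is centered, and $\E{m_\psi(X)}=0$ by the remark after \eqref{eq:second_moments}. So the displayed gradient is already centered and coincides with its projection onto $L_0^2(P_X)^d$; the same holds for the $\psi$ block. Finally, the Gateaux derivatives are in fact Fréchet, because the remainder is exactly $t^2\langle h,\Sigma_\psi h\rangle\le t^2\norm{\Sigma_\psi}\norm{h}^2$.
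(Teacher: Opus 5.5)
Your proposal is correct and follows essentially the same route as the paper. Both proofs expand $L(\phi+th,\psi)$ exactly in $t$, rewrite the joint term as $\mathbb{E}_{P_X}[h(X)^\top m_\psi(X)]$ by the tower property, identify the gradient by Riesz representation, and read off the Hessian block from the quadratic coefficient. The differences are minor: you expand $\Sigma_{\phi+th}$ inside the trace, whereas the paper squares $\phi(X)^\top\psi(Y)$ under $P_X\otimes P_Y$ and uses independence. You also check explicitly that $2\Sigma_\psi\phi-2m_\psi$ is centered, so that the representer lies in $L_0^2(P_X)^d$, a rigor point the paper leaves implicit.
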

}

The proof is provided in
Appendix~\ref{app:proofs_functional_calculus}.
By Proposition~\ref{prop:derivatives}, fixing either factor makes $L$ a
quadratic functional of the other, with a constant block Hessian. We
alternate between the two factors in Gauss--Seidel order: we first update
$\psi$ using $\phi_t$ and then update $\phi$ using the new factor
$\psi_{t+1}$. The relaxed functional Newton steps are
\begin{align*}
\psi_{t+1}
= \psi_t
- \frac{\eta_\psi}{2}\, \Sigma_{\phi,t}^{-1}
\nabla_\psi L(\phi_t,\psi_t),\qquad\phi_{t+1}
= \phi_t
- \frac{\eta_\phi}{2}\, \Sigma_{\psi,t+1}^{-1}
\nabla_\phi L(\phi_t,\psi_{t+1}),
\end{align*}
with relaxation parameters $\eta_\phi,\eta_\psi \in (0,1]$.

To ensure that both Newton steps are well defined and uniformly stable, we
impose the following assumption.

\begin{assumption}[Uniformly nondegenerate iterates]
\label{ass:nondegenerate_factors}
There exist constants $0<\lambda\le\Lambda<\infty$ such that, at every
iteration $t$, the second-moment matrices required by the two block updates
satisfy
\[
    \lambda I\preceq\Sigma_{\phi,t}\preceq\Lambda I,
    \quad
    \lambda I\preceq\Sigma_{\psi,t+1}\preceq\Lambda I.
\]
\end{assumption}

{Assumption~\ref{ass:nondegenerate_factors} is a regularity condition
that keeps the unregularized population analysis tractable.
Because each block subproblem is quadratic, its Newton step points directly
to its unique minimizer. Indeed, fixing $\phi=\phi_t$,
Proposition~\ref{prop:derivatives} gives
$\nabla_\psi L(\phi_t,\psi)(y)=2\Sigma_{\phi,t}\psi(y)-2m_{\phi,t}(y)$;
setting this gradient to zero and using $\Sigma_{\phi,t}\succ0$
(Assumption~\ref{ass:nondegenerate_factors}) yields the unique block
minimizer
\[
    \psi_{t+1}^*(y)
    \coloneqq \Sigma_{\phi,t}^{-1}m_{\phi,t}(y).
\]
Analogously, fixing $\psi=\psi_{t+1}$ and setting
$\nabla_\phi L(\phi,\psi_{t+1})=0$ gives the block minimizer
\[
    \phi_{t+1}^*(x)
    \coloneqq \Sigma_{\psi,t+1}^{-1}m_{\psi,t+1}(x).
\]
Substituting the gradients from Proposition~\ref{prop:derivatives} into the
Newton steps gives the equivalent and more revealing form
\[
    \psi_{t+1}
    =(1-\eta_\psi)\psi_t+\eta_\psi\psi_{t+1}^*,
    \quad
    \phi_{t+1}
    =(1-\eta_\phi)\phi_t+\eta_\phi\phi_{t+1}^*.
\]
Thus, each update is a convex combination of the current factor and its best
response. When $\eta_\phi=\eta_\psi=1$, each step exactly minimizes the
current block subproblem; smaller values give damped block-Newton steps.

\subsection{Spectral normalization}

The factors produced by the block updates approximate $\kp_0$ well,
but their coordinates need not be orthogonal, normalized, or aligned with
the singular directions: $\Sigma_\phi$ and $\Sigma_\psi$ need not even be
equal. Recovering an explicit spectral representation therefore requires a
change of basis.

Since the model and the loss depend on $(\phi,\psi)$ only through the
pointwise product $\phi(x)^\top\psi(y)$, the reparametrization
\[
    (\phi,\psi)\longmapsto(A\phi,A^{-\top}\psi)
\]
leaves { $\kp_{\phi,\psi}$} and $L$ unchanged for every invertible
$A\in\R^{d\times d}$, since
$(A\phi(x))^\top(A^{-\top}\psi(y))=\phi(x)^\top\psi(y)$. This freedom is
exactly what is needed: choosing $A$ to make the second-moment matrices
diagonal and equal, and then rescaling, turns $(\phi,\psi)$ into an
equivalent pair with orthonormal coordinates, without changing the fitted
kernel or the loss.

\begin{proposition}\label{prop:balancing}
Let $\Sigma_\phi,\Sigma_\psi\succ0$ and let
$\Sigma_\phi^{1/2}\Sigma_\psi\Sigma_\phi^{1/2}=U\Lambda U^\top$ be an eigendecomposition
with $U$ orthogonal and $\Lambda=\operatorname{diag}(s_1^2,\dots,s_d^2)$,
$s_1\ge\cdots\ge s_d>0$. Set $A=\Lambda^{1/4}U^\top\Sigma_\phi^{-1/2}$,
$\tilde\phi=A\phi$, and $\tilde\psi=A^{-\top}\psi$. Then
\[
    \Sigma_{\tilde\phi}=\Sigma_{\tilde\psi}=\operatorname{diag}(s_1,\dots,s_d),
    \qquad \tilde\phi^\top\tilde\psi=\phi^\top\psi,
\]
so { $\kp_{\tilde\phi,\tilde\psi}=\kp_{\phi,\psi}$} and
$L(\tilde\phi,\tilde\psi)=L(\phi,\psi)$, while
$s_i=\sqrt{\lambda_i(\Sigma_\phi\Sigma_\psi)}$
are the singular values of { $\kp_{\phi,\psi}$}. Writing $\tilde\phi=D^{1/2}\phi^\perp$
and $\tilde\psi=D^{1/2}\psi^\perp$ with $D=\operatorname{diag}(s_i)$ gives orthonormal
$\phi^\perp,\psi^\perp$ and recovers the SVD form
{ $\kp_{\phi,\psi}=\sum_{i=1}^d s_i\phi_i^\perp(x)\psi_i^\perp(y)$}.
\end{proposition}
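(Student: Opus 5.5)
The plan is a direct computation of how the second-moment matrices transform under the reparametrization $(\phi,\psi)\mapsto(A\phi,A^{-\top}\psi)$. For any invertible $A$, linearity of expectation gives
\[
    \Sigma_{A\phi}=A\Sigma_\phi A^\top,
    \qquad
    \Sigma_{A^{-\top}\psi}=A^{-\top}\Sigma_\psi A^{-1}.
\]
The kernel and loss invariance, $\tilde\phi^\top\tilde\psi=\phi^\top\psi$ and hence $\kp_{\tilde\phi,\tilde\psi}=\kp_{\phi,\psi}$ and $L(\tilde\phi,\tilde\psi)=L(\phi,\psi)$, was already observed just before the statement. Centering is preserved because $A$ is a constant matrix.

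\textbf{Step 1: the $\tilde\phi$ moment.} Substituting $A=\Lambda^{1/4}U^\top\Sigma_\phi^{-1/2}$ gives
\[
    \Sigma_{\tilde\phi}
    =\Lambda^{1/4}U^\top\Sigma_\phi^{-1/2}\Sigma_\phi\Sigma_\phi^{-1/2}U\Lambda^{1/4}
    =\Lambda^{1/2}
    =\operatorname{diag}(s_i).
\]

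\textbf{Step 2: the $\tilde\psi$ moment.} Here $A^{-\top}=\Lambda^{-1/4}U^\top\Sigma_\phi^{1/2}$, using that $U$ is orthogonal and $\Sigma_\phi^{1/2}$ is symmetric. Therefore
\[
    \Sigma_{\tilde\psi}
    =\Lambda^{-1/4}U^\top\Sigma_\phi^{1/2}\Sigma_\psi\Sigma_\phi^{1/2}U\Lambda^{-1/4}
    =\Lambda^{-1/4}\Lambda\Lambda^{-1/4}
    =\Lambda^{1/2},
\]
by the defining eigendecomposition. The main point requiring care is the bookkeeping of transposes and inverses in $A^{-\top}$. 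The positivity $s_d>0$, which makes $\Lambda^{-1/4}$ well defined, follows because $\Sigma_\phi^{1/2}\Sigma_\psi\Sigma_\phi^{1/2}$ is congruent to $\Sigma_\psi\succ0$.

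\textbf{Step 3: identifying the $s_i$ with the spectrum of $\Sigma_\phi\Sigma_\psi$.} Since
\[
    \Sigma_\phi\Sigma_\psi=\Sigma_\phi^{1/2}\bigl(\Sigma_\phi^{1/2}\Sigma_\psi\Sigma_\phi^{1/2}\bigr)\Sigma_\phi^{-1/2},
\]
the matrix $\Sigma_\phi\Sigma_\psi$ is similar to $\Sigma_\phi^{1/2}\Sigma_\psi\Sigma_\phi^{1/2}$. Hence its eigenvalues are exactly $s_i^2$, which gives $s_i=\sqrt{\lambda_i(\Sigma_\phi\Sigma_\psi)}$.

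\textbf{Step 4: orthonormal factors and the SVD form.} Define $\phi^\perp=D^{-1/2}\tilde\phi$ and $\psi^\perp=D^{-1/2}\tilde\psi$. Then $\Sigma_{\phi^\perp}=D^{-1/2}DD^{-1/2}=I$, and likewise $\Sigma_{\psi^\perp}=I$, so both coordinate systems are orthonormal in $L^2$. Moreover
\[
    \phi^\top\psi=\tilde\phi^\top\tilde\psi=(\phi^\perp)^\top D\,\psi^\perp=\sum_i s_i\phi_i^\perp(x)\psi_i^\perp(y).
\]

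\textbf{Step 5: the $s_i$ are the singular values of $\kp_{\phi,\psi}$.} The integral operator with kernel $\sum_i s_i\phi_i^\perp\otimes\psi_i^\perp$, built from orthonormal systems with $s_i>0$, maps $\psi_j^\perp\mapsto s_j\phi_j^\perp$ and vanishes on the orthogonal complement of $\operatorname{span}\{\psi_j^\perp\}$. This is therefore its SVD, and its singular values are the $s_i$.
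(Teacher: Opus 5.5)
Your proposal is correct and follows the paper's argument. The paper factors $A$ into a normalization $\Sigma_\phi^{-1/2}$, a rotation $U^\top$, and a balancing $\Lambda^{1/4}$, tracking the moment pair $(I,\Sigma_\phi^{1/2}\Sigma_\psi\Sigma_\phi^{1/2})\to(I,\Lambda)\to(\Lambda^{1/2},\Lambda^{1/2})$; you instead verify the composite congruences $A\Sigma_\phi A^\top$ and $A^{-\top}\Sigma_\psi A^{-1}$ directly, using the same similarity argument for $s_i^2=\lambda_i(\Sigma_\phi\Sigma_\psi)$ and the same $D^{-1/2}$ orthonormalization, and your Step~5 makes explicit why the $s_i$ are the singular values, which the paper only asserts.
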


The proof is provided in Appendix~\ref{app:proofs_method}.
{ When $\kp_{\phi,\psi}$ is an optimal rank-$d$ truncation of
$\kp_0$, the learned values $s_i$ and the normalized factors coincide
with a leading singular system of $\CEO_0$.}

The block updates commute with the reparametrization
$(\phi,\psi)\mapsto(A\phi,A^{-\top}\psi)$. Indeed, since $m_\phi$ is linear
and $\Sigma_\phi$ is bilinear in $\phi$,
\[
    m_{A\phi}=Am_\phi,
    \qquad
    \Sigma_{A\phi}=A\Sigma_\phi A^\top,
\]
so that, for every invertible $A$,
\[
    \Sigma_{A\phi}^{-1}m_{A\phi}
    =(A\Sigma_\phi A^\top)^{-1}(Am_\phi)
    =A^{-\top}\Sigma_\phi^{-1}m_\phi,
\]
and, analogously,
$\Sigma_{A^{-\top}\psi}^{-1}m_{A^{-\top}\psi}=A\Sigma_\psi^{-1}m_\psi$.
Consequently, the block update computed from a reparametrized factor equals
the correspondingly reparametrized update computed from the original
factor.
Hence reparametrizing the current iterate before or after a block update
produces the same pair up to the change of basis, so the represented kernel
$\kp_t=\phi_t^\top\psi_t$ and the loss $L_t$ are unaffected by when the
balancing transform of Proposition~\ref{prop:balancing} is applied. Algorithm~\ref{alg:fsnm} applies this transform once, after the
final iteration: diagonalizing and equalizing $\Sigma_{\phi_T}$ and
$\Sigma_{\psi_T}$ while extracting the singular values $s_i$ and the
orthonormal factors $\phi^\perp,\psi^\perp$, to construct the returned
spectral representation. Section~\ref{sec:practical_implementation} describes a practical
variant that instead balances after every iteration, motivated by
finite-sample considerations that fall outside the equivariance argument
above.


\begin{algorithm}[htbp]
\caption{Functional Spectral-Newton Method (FSNM)}
\label{alg:fsnm}
\begin{algorithmic}[1]
\Require { centered initial $\phi_0,\psi_0$}; step sizes $\eta_\phi,\eta_\psi\in(0,1]$; iterations $T$
\For{$t=0,1,\ldots,T-1$}
    \State $\psi_{t+1}(y) = (1-\eta_\psi)\,\psi_t(y) + \eta_\psi\,\Sigma_{\phi,t}^{-1}m_{\phi,t}(y)$
    \State $\phi_{t+1}(x) = (1-\eta_\phi)\,\phi_t(x) + \eta_\phi\,\Sigma_{\psi,t+1}^{-1}m_{\psi,t+1}(x)$
\EndFor
\State let $A_T$ be the balancing matrix of
    Proposition~\ref{prop:balancing} for $(\Sigma_{\phi,T},\Sigma_{\psi,T})$
\State $\phi_T \gets A_T\,\phi_T$, $\qquad \psi_T \gets A_T^{-\top}\,\psi_T$
\State $D \gets \Sigma_{\phi,T}$ \Comment{$(\phi_T,\psi_T)$ is now balanced by the preceding step: $\Sigma_{\phi,T}=\Sigma_{\psi,T}=D$}
\State $\phi^\perp \gets D^{-1/2}\phi_T, \qquad \psi^\perp \gets D^{-1/2}\psi_T$
\Ensure spectral representation $(s_i,\phi_i^\perp,\psi_i^\perp)_{i=1}^d$ with $s_i=D_{ii}$
\end{algorithmic}
\end{algorithm}
\FloatBarrier

\subsection{Practical implementation}
\label{sec:practical_implementation}

Each iteration of Algorithm~\ref{alg:fsnm} has two finite-sample steps: the
$\psi$-block update and the $\phi$-block update. We estimate the
conditional means and second-moment matrices of
\eqref{eq:conditional_moments}--\eqref{eq:second_moments} from the sample
$\mathcal D_n$, and describe each step in turn. Our implementation
adds a third, per-iteration balancing step beyond Algorithm~\ref{alg:fsnm}.

\paragraph{$\psi$-block update.}
Given $\phi_t$, we estimate its second-moment matrix by
\[
    \widehat\Sigma_{\phi,t}
    =\frac1n\sum_{i=1}^n\phi_t(X_i)\phi_t(X_i)^\top,
\]
and, for each coordinate $j=1,\ldots,d$, fit a regression tree on $Y$ to
the centered values of $\phi_t$:
\[
    \widehat m_{\phi,t,j}
    \in\arg\min_{g\,\text{tree on }\Y}
    \frac1n\sum_{i=1}^n
    \bigl(\phi_{t,j}(X_i)-\bar\phi_{t,j}-g(Y_i)\bigr)^2,
    \qquad
    \bar\phi_{t}=\frac1n\sum_{i=1}^n\phi_t(X_i).
\]
Stacking these $d$ fits into $\widehat m_{\phi,t}$ gives the finite-sample
update
\[
    \psi_{t+1}
    =(1-\eta_\psi)\psi_t
    +\eta_\psi(\widehat\Sigma_{\phi,t}+\rho I_d)^{-1}
      \widehat m_{\phi,t}.
\]
The ridge term $\rho I_d$ is added before inverting $\widehat\Sigma_{\phi,t}$:
with a finite sample the estimated second-moment matrix can be
ill-conditioned or nearly singular---for instance early in training, or
whenever a fitted coordinate of $\phi_t$ is nearly constant---and the ridge
term keeps this inversion numerically stable.

\paragraph{$\phi$-block update.}
The $\phi$-block is estimated symmetrically from the new factor
$\psi_{t+1}$: with
\[
    \bar\psi_{t+1}=\frac1n\sum_{i=1}^n\psi_{t+1}(Y_i),
    \qquad
    \widehat\Sigma_{\psi,t+1}
    =\frac1n\sum_{i=1}^n\psi_{t+1}(Y_i)\psi_{t+1}(Y_i)^\top,
\]
we fit, for $j=1,\ldots,d$,
\[
    \widehat m_{\psi,t+1,j}
    \in\arg\min_{h\,\text{tree on }\X}
    \frac1n\sum_{i=1}^n
    \bigl(\psi_{t+1,j}(Y_i)-\bar\psi_{t+1,j}-h(X_i)\bigr)^2,
\]
and update
\[
    \phi_{t+1}
    =(1-\eta_\phi)\phi_t
    +\eta_\phi(\widehat\Sigma_{\psi,t+1}+\rho I_d)^{-1}
      \widehat m_{\psi,t+1}.
\]

\paragraph{Balancing.}
Algorithm~\ref{alg:fsnm} balances only after the final iteration,
whereas our implementation balances after every iteration using empirical
second moments. This heuristic is outside the theory: although exact
balancing preserves the kernel and loss by Proposition~\ref{prop:balancing},
ridge regularization and coordinatewise tree fitting are not exactly
equivariant under general reparametrizations. Per-iteration balancing
improves conditioning; in the rank-three experiment of
Section~\ref{sec:rank_three_experiment}, balancing only at the end increases
kernel RMSE from $0.1204$ to $0.1944$, subspace error from $0.3340$ to
$0.5897$, and the largest condition number from $6.14$ to $22.06$. We
therefore use per-iteration balancing with $\rho=10^{-8}$; its final
application returns the spectral representation, while
Section~\ref{sec:finite_sample_guarantee} gives a qualitative generalization
guarantee whenever the returned kernel admits a factorization in the stated
convex-hull classes.

\section{Theoretical Guarantees}
\label{sec:theoretical_guarantees}

This section establishes three complementary properties of \ourmethod.
First, the population weak-learner updates decrease the objective and approach
block stationarity at an $O(1/T)$ best-iterate rate. Second, the stationary
points and local minima of the unrestricted population objective admit a
spectral characterization. Third, a uniform-concentration argument gives a
finite-sample generalization bound for the final fitted model.

We use distinct notation for population tree approximations and
sample-fitted regressions. Let $\mathcal T_X^0\subset L_0^2(P_X)^d$ and
$\mathcal T_Y^0\subset L_0^2(P_Y)^d$ denote the classes of
population-centered, vector-valued regression trees on $\mathcal X$ and
$\mathcal Y$, respectively. Each element stacks $d$ scalar trees, one per
coordinate, from a prescribed split family. At iteration $t$, define the
population tree regressions by
\begin{align*}
    m_{\phi,t}^{\mathcal T}
    &\in
    \arg\min_{g\in\mathcal T_Y^0}
    \E{\|\phi_t(X)-g(Y)\|_2^2},
    &
    m_{\psi,t+1}^{\mathcal T}
    &\in
    \arg\min_{h\in\mathcal T_X^0}
    \E{\|\psi_{t+1}(Y)-h(X)\|_2^2}.
\end{align*}

The unrestricted conditional means $m_{\phi,t}$ and
$m_{\psi,t+1}$, and their exact block best responses
$\psi_{t+1}^*$ and $\phi_{t+1}^*$, were defined in
Section~\ref{sec:method}. Their tree-restricted counterparts are
\[
    \psi_{t+1}^{\mathcal T,*}
    \coloneqq\Sigma_{\phi,t}^{-1}m_{\phi,t}^{\mathcal T},
    \qquad
    \phi_{t+1}^{\mathcal T,*}
    \coloneqq\Sigma_{\psi,t+1}^{-1}m_{\psi,t+1}^{\mathcal T}.
\]

All these functions are centered. The exact responses serve only as
blockwise benchmarks along the tree-generated trajectory; they do not define
a separate sequence of iterates. As in Algorithm~\ref{alg:fsnm}, balancing is
applied after the final iteration and does not enter the optimization
dynamics analyzed below.

\subsection{Convergence to block stationarity}

We take the initial factors to be centered. Centering is preserved by
the population updates, while no balancing condition is imposed on the
intermediate iterates. Assumption~\ref{ass:nondegenerate_factors} controls
their second-moment matrices directly.
Define their associated block-optimality gaps before the two updates by
\[
    G_t^\psi
    \coloneqq L(\phi_t,\psi_t)-L(\phi_t,\psi_{t+1}^*),
    \quad
    G_t^\phi
    \coloneqq
    { L(\phi_t,\psi_{t+1})
    -L(\phi_{t+1}^*,\psi_{t+1})}.
\]
Both gaps are nonnegative because they compare the current factor with its
exact block minimizer.

The difference between the unrestricted and tree-restricted regressions
determines the error in the approximate Newton direction. For a
positive-definite matrix $M$ and a vector-valued function $f$, write
\[
    \|f\|_M^2\coloneqq \E{f(Z)^\top M f(Z)},
\]
where the distribution of $Z$ is determined by the domain of $f$.
To compare the oracle and tree-based updates directly, let
\[
    r_t^\psi\coloneqq\psi_{t+1}^*-\psi_t,
    \quad
    r_t^\phi\coloneqq\phi_{t+1}^*-\phi_t
\]
be the displacements from the current factors to their exact block
minimizers, or equivalently the exact Newton directions. Replacing the oracle
responses by the fitted weak-learner responses gives the directions actually
used by the approximate updates:
\[
    h_t^\psi\coloneqq\psi_{t+1}^{\mathcal T,*}-\psi_t,
    \quad
    h_t^\phi\coloneqq\phi_{t+1}^{\mathcal T,*}-\phi_t.
\]
By the definitions above, their approximation errors satisfy

\begin{align*}
    r_t^\psi-h_t^\psi
    =\Sigma_{\phi,t}^{-1}
      (m_{\phi,t}-m_{\phi,t}^{\mathcal T}),\qquad r_t^\phi-h_t^\phi
    =\Sigma_{\psi,t+1}^{-1}
      (m_{\psi,t+1}-m_{\psi,t+1}^{\mathcal T}).
\end{align*}

With the gaps, directions, and approximation errors now in place, the next
result characterizes the gaps in closed form and records the consequences
used throughout the convergence analysis.

\begin{lemma}[Block-optimality gap]
\label{lem:gap_characterization}
For every $\psi\in L_0^2(P_Y)^d$ and $\phi\in L_0^2(P_X)^d$,
\[
    L(\phi_t,\psi)-L(\phi_t,\psi_{t+1}^*)
    =\|\psi-\psi_{t+1}^*\|_{\Sigma_{\phi,t}}^2,
    \qquad
    L(\phi,\psi_{t+1})-L(\phi_{t+1}^*,\psi_{t+1})
    =\|\phi-\phi_{t+1}^*\|_{\Sigma_{\psi,t+1}}^2.
\]
In particular, $G_t^\psi=\|r_t^\psi\|_{\Sigma_{\phi,t}}^2$ and
$G_t^\phi=\|r_t^\phi\|_{\Sigma_{\psi,t+1}}^2$, and, under
Assumption~\ref{ass:nondegenerate_factors},
\[
    \lambda\|\psi_t-\psi_{t+1}^*\|_{L^2(P_Y)}^2
    \le G_t^\psi\le
    \Lambda\|\psi_t-\psi_{t+1}^*\|_{L^2(P_Y)}^2,
\]
with the analogous sandwich for $G_t^\phi$ and
$\phi_t-\phi_{t+1}^*$. Moreover,
\[
    \nabla_\psi L(\phi_t,\psi_t)=2\Sigma_{\phi,t}(\psi_t-\psi_{t+1}^*),
    \qquad
    \|\nabla_\psi L(\phi_t,\psi_t)\|_{L^2(P_Y)}^2\le4\Lambda G_t^\psi,
\]
with the analogous statements for $\nabla_\phi L(\phi_t,\psi_{t+1})$ and
$G_t^\phi$.
\end{lemma}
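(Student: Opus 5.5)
The plan is to exploit that, with $\phi=\phi_t$ fixed, $L(\phi_t,\cdot)$ is an exact quadratic in $\psi$. Its Hessian is the constant $2\Sigma_{\phi,t}$ by Proposition~\ref{prop:derivatives}, and its minimizer is $\psi_{t+1}^*=\Sigma_{\phi,t}^{-1}m_{\phi,t}$. I will expand $L(\phi_t,\psi)$ around $\psi_{t+1}^*$ directly rather than invoking a Taylor theorem in function space.

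Write $\psi=\psi_{t+1}^*+\delta$ with $\delta\in L_0^2(P_Y)^d$. First, the tower property gives $A(\phi_t,\psi)=\E{m_{\phi,t}(Y)^\top\psi(Y)}$. Second, $\tr{\Sigma_{\phi,t}\Sigma_\psi}=\E{\psi(Y)^\top\Sigma_{\phi,t}\psi(Y)}=\|\psi\|_{\Sigma_{\phi,t}}^2$. Hence
\[
L(\phi_t,\psi)=\|\psi\|_{\Sigma_{\phi,t}}^2-2\E{m_{\phi,t}(Y)^\top\psi(Y)}.
\]
Substitute $m_{\phi,t}=\Sigma_{\phi,t}\psi_{t+1}^*$. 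The cross terms
\[
2\E{\delta(Y)^\top\Sigma_{\phi,t}\psi_{t+1}^*(Y)}-2\E{(\Sigma_{\phi,t}\psi_{t+1}^*(Y))^\top\delta(Y)}
\]
cancel, leaving
\[
L(\phi_t,\psi)=L(\phi_t,\psi_{t+1}^*)+\|\delta\|_{\Sigma_{\phi,t}}^2.
\]
This is the first identity.

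The $\phi$-identity follows by the symmetric computation with $\psi_{t+1}$ fixed, using $m_{\psi,t+1}=\Sigma_{\psi,t+1}\phi_{t+1}^*$ and $A(\phi,\psi_{t+1})=\E{\phi(X)^\top m_{\psi,t+1}(X)}$. Setting $\psi=\psi_t$ (respectively $\phi=\phi_t$) gives $G_t^\psi=\|r_t^\psi\|_{\Sigma_{\phi,t}}^2$ and $G_t^\phi=\|r_t^\phi\|_{\Sigma_{\psi,t+1}}^2$.

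For the sandwich, I will use the pointwise bounds $\lambda\|v\|_2^2\le v^\top\Sigma v\le\Lambda\|v\|_2^2$ from Assumption~\ref{ass:nondegenerate_factors}, with $v=(\psi_t-\psi_{t+1}^*)(Y)$, and then take expectations.

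For the gradient statements, Proposition~\ref{prop:derivatives} gives
\[
\nabla_\psi L(\phi_t,\psi_t)=2\Sigma_{\phi,t}\psi_t-2m_{\phi,t}=2\Sigma_{\phi,t}(\psi_t-\psi_{t+1}^*).
\]
Then
\[
\|\nabla_\psi L\|_{L^2}^2=4\,\E{e^\top\Sigma_{\phi,t}^2e},\qquad e=\psi_t-\psi_{t+1}^*.
\]
Since $\Sigma^2=\Sigma^{1/2}\Sigma\Sigma^{1/2}\preceq\Lambda\Sigma$, this is at most $4\Lambda\|e\|_{\Sigma_{\phi,t}}^2=4\Lambda G_t^\psi$. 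The $\phi$ case is analogous.

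The only delicate point is the matrix inequality $\Sigma^2\preceq\Lambda\Sigma$, which holds because $\Sigma$ and $\Sigma^2$ commute and share eigenvectors; it gives the sharp constant $4\Lambda$ rather than $4\Lambda^2/\lambda$. A minor check is that $\psi_{t+1}^*$ is centered and in $L^2$: $m_{\phi,t}$ is an $L^2$-contraction image of $\phi_t$ and is centered, so all the expectations above are finite.
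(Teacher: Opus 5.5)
Your proposal is correct and follows essentially the paper's argument. You rewrite the block objective as $\E{\psi^\top\Sigma_{\phi,t}\psi-2\psi^\top m_{\phi,t}}$, complete the square around $\psi_{t+1}^*$ using $m_{\phi,t}=\Sigma_{\phi,t}\psi_{t+1}^*$, and obtain the sandwich and gradient bounds from $\lambda I\preceq\Sigma\preceq\Lambda I$ and $\Sigma^2\preceq\Lambda\Sigma$. The only cosmetic difference is that you justify $\Sigma^2\preceq\Lambda\Sigma$ by the congruence $\Sigma^{1/2}\Sigma\Sigma^{1/2}$, whereas the paper checks $\lambda_i^2\le\Lambda\lambda_i$ eigenvalue by eigenvalue.
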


{
\begin{assumption}[Relative tree-regression accuracy]
\label{ass:relative_tree_accuracy}
There exist $\gamma_\phi,\gamma_\psi\in(0,1]$ such that, at every iterate
generated by the population updates,
\begin{align*}
    \inf_{g\in\mathcal T_Y^0}
    \|m_{\phi,t}-g\|_{L^2(P_Y)}^2
    \le
    \lambda(1-\gamma_\psi^2)G_t^\psi,\quad    \inf_{h\in\mathcal T_X^0}
    \|m_{\psi,t+1}-h\|_{L^2(P_X)}^2
    \le
    \lambda(1-\gamma_\phi^2)G_t^\phi.
\end{align*}
\end{assumption}

Assumption~\ref{ass:relative_tree_accuracy} requires the best population
tree to recover a fixed fraction of the remaining block residual.
Since
$\Sigma_{\phi,t}^{-1}\preceq\lambda^{-1}I$, it implies
\begin{align*}
    \|r_t^\psi-h_t^\psi\|_{\Sigma_{\phi,t}}^2
    \le
    (1-\gamma_\psi^2)\|r_t^\psi\|_{\Sigma_{\phi,t}}^2,\quad
    \|r_t^\phi-h_t^\phi\|_{\Sigma_{\psi,t+1}}^2
    \le
    (1-\gamma_\phi^2)\|r_t^\phi\|_{\Sigma_{\psi,t+1}}^2.
\end{align*}
This $L^2$ formulation is slightly stronger than required for the population
descent result, for which the two displayed direction inequalities suffice.
The condition holds with $\gamma_\phi=\gamma_\psi=1$ when the unrestricted
conditional means are recovered exactly. It is analogous to the
relative-accuracy or weak-learning conditions used in boosting analyses
\citep{grubb2011generalized,lu2020randomized,lu2020accelerating} and to the
Hessian-induced condition used for restricted Newton boosting by
\citet{zozoulenko2026gradient}.}

\begin{theorem}
\label{thm:weak_learner_convergence}
Suppose the population tree regressions attain their minima,
Assumptions~\ref{ass:nondegenerate_factors}
and~\ref{ass:relative_tree_accuracy} hold, and let
$\eta_\phi,\eta_\psi\in(0,1]$. Consider the approximate updates
\begin{align*}
    { \psi_{t+1}}
    &=\psi_t+\eta_\psi h_t^\psi
      =(1-\eta_\psi)\psi_t
        +\eta_\psi\psi_{t+1}^{\mathcal T,*},\\
    { \phi_{t+1}}
    &=\phi_t+\eta_\phi h_t^\phi
      =(1-\eta_\phi)\phi_t
        +\eta_\phi\phi_{t+1}^{\mathcal T,*},
\end{align*}
and define
\[
    q_\psi
    \coloneqq 1-\eta_\psi
      +\eta_\psi\sqrt{1-\gamma_\psi^2},
    \quad
    q_\phi
    \coloneqq 1-\eta_\phi
      +\eta_\phi\sqrt{1-\gamma_\phi^2},
    \quad
    a\coloneqq\min\{1-q_\phi^2,1-q_\psi^2\}.
\]
Then $q_\phi,q_\psi<1$, hence $a>0$, and the loss $L(\phi_t,\psi_t)$
decreases monotonically to a finite limit
$L_\infty\coloneqq\lim_{t\to\infty}L(\phi_t,\psi_t)$. For every $T\ge1$,
\[
    \min_{0\le t<T}\bigl(G_t^\phi+G_t^\psi\bigr)
    \le
    \frac{L(\phi_0,\psi_0)-L_\infty}{aT},
\]
and
\[
    \min_{0\le t<T}
    \Bigl(
      \|\nabla_\phi L(\phi_t,\psi_{t+1})\|_{L^2(P_X)}^2
      +
      \|\nabla_\psi L(\phi_t,\psi_t)\|_{L^2(P_Y)}^2
    \Bigr)
    \le
    \frac{4\Lambda\bigl(L(\phi_0,\psi_0)-L_\infty\bigr)}{aT}.
\]
Moreover, the per-iteration descent inequality underlying these bounds,
together with convergence of $L_t\coloneqq L(\phi_t,\psi_t)$, yields
full-sequence convergence: $G_t^\phi,G_t^\psi\to0$. By the gradient bound of
Lemma~\ref{lem:gap_characterization}, the corresponding functional
gradients $\nabla_\psi L(\phi_t,\psi_t)$ and
$\nabla_\phi L(\phi_t,\psi_{t+1})$ also vanish. If the iterate sequence
converges in product $L^2$, its limit is stationary. Exact block
regression is recovered by taking $h_t^\phi=r_t^\phi$, $h_t^\psi=r_t^\psi$,
and $\gamma_\phi=\gamma_\psi=1$, giving
$a=\min\{\eta_\phi(2-\eta_\phi),\eta_\psi(2-\eta_\psi)\}$.
\end{theorem}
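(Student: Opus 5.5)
The plan is to prove a per-iteration descent inequality for each block update using the exact quadratic structure from Lemma~\ref{lem:gap_characterization}, and then sum over iterations.

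\textbf{The $\psi$-block.} Fix $\phi_t$. By Lemma~\ref{lem:gap_characterization},
\[
    L(\phi_t,\psi_{t+1})-L(\phi_t,\psi_{t+1}^*)=\|\psi_{t+1}-\psi_{t+1}^*\|_{\Sigma_{\phi,t}}^2 .
\]
Write the new iterate as $\psi_{t+1}=\psi_t+\eta_\psi h_t^\psi$ and note $\psi_{t+1}^*=\psi_t+r_t^\psi$. Then
\[
    \psi_{t+1}-\psi_{t+1}^* = -(1-\eta_\psi)r_t^\psi-\eta_\psi(r_t^\psi-h_t^\psi).
\]
Apply the triangle inequality in the $\Sigma_{\phi,t}$-norm, together with the direction inequality $\|r_t^\psi-h_t^\psi\|_{\Sigma_{\phi,t}}\le\sqrt{1-\gamma_\psi^2}\,\|r_t^\psi\|_{\Sigma_{\phi,t}}$ implied by Assumption~\ref{ass:relative_tree_accuracy}. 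This gives $\|\psi_{t+1}-\psi_{t+1}^*\|_{\Sigma_{\phi,t}}\le q_\psi\|r_t^\psi\|_{\Sigma_{\phi,t}}$. The remaining gap is therefore at most $q_\psi^2G_t^\psi$, so
\[
    L(\phi_t,\psi_t)-L(\phi_t,\psi_{t+1})\ge(1-q_\psi^2)G_t^\psi .
\]

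\textbf{The $\phi$-block.} The identical argument with $\psi_{t+1}$ fixed and $\Sigma_{\psi,t+1}$ as the weight gives
\[
    L(\phi_t,\psi_{t+1})-L(\phi_{t+1},\psi_{t+1})\ge(1-q_\phi^2)G_t^\phi .
\]
Adding the two block inequalities yields $L_t-L_{t+1}\ge a(G_t^\phi+G_t^\psi)\ge0$.

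We still need $q<1$. Since $\gamma\in(0,1]$, we have $\sqrt{1-\gamma^2}<1$, and since $\eta>0$, $q$ is a strict convex combination that lies below $1$. Also $q\ge0$, so $1-q^2>0$ and hence $a>0$.

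\textbf{Bounding the loss below.} Monotone decrease is now immediate; for the finite limit we need a lower bound on $L$. Proposition~\ref{prop:population_loss} gives $L=\|\kp_0-\kp_{\phi,\psi}\|^2-\|\kp_0\|^2\ge-\|\kp_0\|^2$, so $L_t$ converges to some $L_\infty>-\infty$.

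\textbf{Rates and convergence of the gaps.} Telescoping gives
\[
    aT\min_{t<T}(G_t^\phi+G_t^\psi)\le a\sum_{t<T}(G_t^\phi+G_t^\psi)\le L_0-L_T\le L_0-L_\infty .
\]
The gradient rate then follows from the bounds $\|\nabla\|^2\le4\Lambda G$ in Lemma~\ref{lem:gap_characterization}, applied blockwise and summed. Because the series $\sum_t(G_t^\phi+G_t^\psi)$ converges, its terms tend to $0$, which gives full-sequence convergence of the gaps and, via the same lemma, of the gradients.

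\textbf{Stationarity of a limit.} Suppose $(\phi_t,\psi_t)\to(\bar\phi,\bar\psi)$ in product $L^2$. The map $\phi\mapsto\Sigma_\phi$ is continuous, and the conditional expectation $\phi\mapsto m_\phi$ is a contraction, so the gradient formulas of Proposition~\ref{prop:derivatives} are continuous in $(\phi,\psi)$. Since $\psi_{t+1}\to\bar\psi$ as well, $\nabla_\psi L(\phi_t,\psi_t)\to\nabla_\psi L(\bar\phi,\bar\psi)$ and $\nabla_\phi L(\phi_t,\psi_{t+1})\to\nabla_\phi L(\bar\phi,\bar\psi)$, and both limits vanish. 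Hence $(\bar\phi,\bar\psi)$ is stationary.

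\textbf{Exact regression.} With $h=r$ and $\gamma=1$ we get $q=1-\eta$, so $1-q^2=\eta(2-\eta)$, which is the stated value of $a$.

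\textbf{Main obstacle.} The only delicate step is the per-block descent. It must be done through the exact quadratic identity of Lemma~\ref{lem:gap_characterization} applied at the new point, rather than through a Taylor or smoothness bound, so that the contraction factor comes out as exactly $q^2$. This also requires checking that the $\phi$-block identity is used with $\psi_{t+1}$ (not $\psi_t$) fixed, so that the two blocks chain correctly.
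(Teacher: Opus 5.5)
Your proposal is correct and follows the paper's proof essentially step for step. The shared steps are the splitting $\psi_{t+1}-\psi_{t+1}^*=-(1-\eta_\psi)r_t^\psi-\eta_\psi(r_t^\psi-h_t^\psi)$ with the triangle inequality in the $\Sigma_{\phi,t}$-weighted norm, the quadratic gap identity of Lemma~\ref{lem:gap_characterization} evaluated at the new iterate, the lower bound $L\ge-\|\kp_0\|^2$ from Proposition~\ref{prop:population_loss}, telescoping, and continuity of the gradient formulas at the limit point. The only differences are cosmetic: you get $G_t^\phi,G_t^\psi\to0$ from summability of $\sum_t(G_t^\phi+G_t^\psi)$ rather than from $L_t-L_{t+1}\to0$, which is equivalent, and you note $q\ge0$, which the paper leaves implicit when passing from $q<1$ to $1-q^2>0$.
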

}


\subsection{Population landscape}

{ It remains to connect stationarity of the factorized objective with the
spectral target from Section~\ref{sec:background}. Recall the centered
operator $\CEO_0:L_0^2(P_Y)\to L_0^2(P_X)$, whose kernel is $\kp_0$. Its
adjoint satisfies
\[
    \CEO_0^*f(y)=\E{f(X)\mid Y=y},
    \qquad f\in L_0^2(P_X).
\]
}

\begin{theorem}
\label{thm:population_landscape}
Let $(\phi,\psi)$ be a nondegenerate, balanced stationary point of the
population objective over the full centered $L^2$ spaces, with
\[
    \Sigma_\phi=\Sigma_\psi
    =D=\operatorname{diag}(s_1,\ldots,s_d)\succ0.
\]
Set $\phi^\perp=D^{-1/2}\phi$ and
$\psi^\perp=D^{-1/2}\psi$. Then, for every $i$,
\[
    \CEO_0\psi_i^\perp=s_i\phi_i^\perp,
    \quad
    \CEO_0^*\phi_i^\perp=s_i\psi_i^\perp.
\]
If $(\phi,\psi)$ is a local minimum, then, for a suitable leading singular
system,
\[
    \operatorname{span}
    \{\phi_1^\perp,\ldots,\phi_d^\perp\}
    =
    \operatorname{span}
    \{\phi_1^\star,\ldots,\phi_d^\star\},
\]
with the analogous equality holding for the learned and leading right
singular subspaces.
Consequently, every such local minimum is global and realizes the optimal
rank-$d$ truncation.
\end{theorem}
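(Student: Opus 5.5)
Stationarity over the full centered spaces means both functional gradients of Proposition~\ref{prop:derivatives} vanish, so $\Sigma_\psi\phi=m_\psi=\CEO_0\psi$ (applied coordinatewise) and $\Sigma_\phi\psi=m_\phi=\CEO_0^*\phi$. With $\Sigma_\phi=\Sigma_\psi=D$ these read $D\phi=\CEO_0\psi$ and $D\psi=\CEO_0^*\phi$. Substituting $\phi=D^{1/2}\phi^\perp$, $\psi=D^{1/2}\psi^\perp$ and using that $D$ is diagonal, coordinate $i$ gives $s_i^{3/2}\phi_i^\perp=s_i^{1/2}\CEO_0\psi_i^\perp$; dividing by $s_i^{1/2}>0$ yields $\CEO_0\psi_i^\perp=s_i\phi_i^\perp$, and symmetrically $\CEO_0^*\phi_i^\perp=s_i\psi_i^\perp$. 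Orthonormality of $\phi^\perp,\psi^\perp$ follows from $\Sigma_{\phi^\perp}=D^{-1/2}DD^{-1/2}=I$. Hence the $(s_i,\phi_i^\perp,\psi_i^\perp)$ are $d$ singular triples of $\CEO_0$. This first part is routine.

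For the local-minimum claim, I will use that $L(\phi,\psi)+\|\kp_0\|^2=\|\CEO_0-\sum_i\phi_i\otimes\psi_i\|_{\mathrm{HS}}^2$ (Proposition~\ref{prop:population_loss}). At a stationary point, $\kp_{\phi,\psi}=\sum_i s_i\phi_i^\perp\psi_i^\perp$ uses singular values $\{s_i\}$ drawn from the spectrum of $\CEO_0$, and the loss equals $\|\kp_0\|^2-\sum_i s_i^2$ plus the constant. The plan is to argue by contradiction: suppose some used index $k$ has $s_k<\sigma$, where $\sigma$ is a singular value of $\CEO_0$ whose singular pair $(u,v)$ is orthogonal to all $\phi_i^\perp$ (respectively $\psi_i^\perp$). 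Such a pair exists whenever the learned span is not a leading singular subspace; with ties I choose the leading system to include the learned vectors, which is why the statement says ``a suitable'' system. I then perturb only coordinate $k$ along the path
\[
\phi_k(\epsilon)=\sqrt{s_k}\,\phi_k^\perp+\epsilon u,\qquad \psi_k(\epsilon)=\sqrt{s_k}\,\psi_k^\perp+\epsilon v,
\]
which stays centered. By orthogonality the loss decouples into the other coordinates plus the two-dimensional block spanned by $(\phi_k^\perp,u)$ and $(\psi_k^\perp,v)$, on which $\CEO_0$ is $\operatorname{diag}(s_k,\sigma)$.

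The rank-one block function is $f(\epsilon)=\|\operatorname{diag}(s_k,\sigma)-ab^\top\|_F^2$ with $a=(\sqrt{s_k},\epsilon)$ and $b=(\sqrt{s_k},\epsilon)$. Expanding, the $\epsilon^2$ coefficient is $-2\sigma+2s_k$ (from the $-2\sigma\epsilon^2$ cross term and the $2s_k\epsilon^2$ term in $\|a\|^2\|b\|^2$), which is strictly negative when $\sigma>s_k$. So $f$ has a strict local maximum direction at $\epsilon=0$, i.e.\ a descent direction of negative curvature, contradicting local minimality. Therefore every $s_i$ exceeds or ties every unused singular value, so the learned spans coincide with leading left and right singular subspaces. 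The loss then equals $\sum_{i>d}\sigma_i^2$, the Eckart--Young optimum of Theorem~\ref{thm:optimal_truncation}, and the local minimum is global.

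The main obstacle is the degenerate and tie bookkeeping. When the multiplicity at $\sigma_d$ straddles the cutoff, ``suitable leading system'' must be chosen to contain the learned vectors. I must also check that repeated $s_i$ among the learned coordinates do not break the orthogonality used in the decoupled two-dimensional computation. I would handle this by working with the spectral projector onto each singular value's eigenspace, taking $(u,v)$ from the orthogonal complement of the learned vectors within the eigenspace of a strictly larger value.
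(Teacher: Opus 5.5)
Your proposal is correct and follows the paper's proof. The first part is the same stationarity-plus-balancing computation, and the local-minimum part uses the same perturbation $\phi_\varepsilon=\sqrt{s}\,u+\varepsilon u^\star$, $\psi_\varepsilon=\sqrt{s}\,v+\varepsilon v^\star$ along an orthogonal singular pair with larger singular value, giving the loss change $2(s-\sigma)\varepsilon^2+\varepsilon^4<0$. The paper writes this out only for $d=1$ and asserts it ``extends directly'' to $d>1$, whereas your single-coordinate perturbation, orthogonality decoupling, and eigenspace/tie bookkeeping actually supply that extension.
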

It follows that a local minimum cannot omit a singular direction whose
singular value is strictly larger than one of the selected values. Its
components must therefore span a leading $d$ singular subspace. By the
Eckart--Young--Mirsky theorem, stated as
Theorem~\ref{thm:optimal_truncation}, the corresponding rank-$d$
approximation is globally optimal. Therefore, within the nondegenerate
balanced population landscape, every local minimum is global. Finally, when
$\sigma_d>\sigma_{d+1}$, no boundary tie is possible, so the leading left
and right singular subspaces are uniquely determined, up to rotations
within repeated singular-value subspaces.


\subsection{Finite-sample guarantee}
\label{sec:finite_sample_guarantee}

We give a qualitative guarantee for the final fitted model rather than track
the full empirical optimization path. Fix an iteration budget, and let
$\mathcal H_X\subset L_0^2(P_X)$ and
$\mathcal H_Y\subset L_0^2(P_Y)$ be fixed measurable scalar function classes,
chosen independently of the sample and uniformly bounded by $B$, that contain
the initial coordinates and all possible scalar block responses of the
procedure under consideration. Set
\[
    \mathcal F_X\coloneqq\operatorname{conv}(\mathcal H_X),
    \qquad
    \mathcal F_Y\coloneqq\operatorname{conv}(\mathcal H_Y).
\]
Because every damped update is a convex combination of the current factor
and a block response, the coordinates of the factors before final balancing
belong to these classes. We use this containment as the only assumption on
the finite-sample optimization procedure.

For a scalar class $\mathcal H$ on
$\mathcal Z$ and a sample $S=(z_1,\ldots,z_n)$, define its empirical and
worst-case Rademacher complexities by
\[
    \widehat{\mathfrak R}_S(\mathcal H)
    \coloneqq
    \mathbb E_{\epsilon}\!\left[\sup_{h\in\mathcal H}
      \frac1n\sum_{i=1}^n\epsilon_i h(z_i)
    \right]
    ,\qquad
    \mathfrak R_n(\mathcal H)
    \coloneqq
    \sup_{S\in\mathcal Z^n}\widehat{\mathfrak R}_S(\mathcal H),
\]
where the $\epsilon_i$ are independent and uniform on $\{-1,1\}$; see
\citet{bartlett2002rademacher}. Define
for $\delta\in(0,1)$
\[
    \Delta_n(\delta)
    \coloneqq
    \mathfrak R_n(\mathcal H_X)+\mathfrak R_n(\mathcal H_Y)
      +\sqrt{\frac{\log(8d^2/\delta)}{n}}.
\]

Following the definitions in Section \ref{sec:practical_implementation}, define the empirical objective
\[
    \widehat L_n(\phi,\psi)
    \coloneqq
    \tr{\widehat\Sigma_\phi\widehat\Sigma_\psi}
      +2\overline\phi_n^\top\overline\psi_n
      -\frac2n\sum_{i=1}^n\phi(X_i)^\top\psi(Y_i).
\]
The product-of-means term is the empirical counterpart of the centering
correction in the squared error for $\kp_0$. Its population value vanishes
because every function in $\mathcal F_X\cup\mathcal F_Y$ is centered.

\begin{theorem}
\label{thm:finite_sample_generalization}
Suppose $\mathcal D_n$ is i.i.d. Then there is a constant $C_{d,B}>0$
depending only on $d$ and $B$ such that, for every $\delta\in(0,1)$, with
probability at least $1-\delta$,
\[
    \sup_{\substack{\phi\in\mathcal F_X^d\\
                    \psi\in\mathcal F_Y^d}}
    |\widehat L_n(\phi,\psi)-L(\phi,\psi)|
    \le C_{d,B}\Delta_n(\delta).
\]
\end{theorem}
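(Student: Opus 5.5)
We decompose $\widehat L_n-L$ into three terms and control each by a scalar uniform deviation over a class built from $\mathcal H_X$ and $\mathcal H_Y$. Since every $\phi\in\mathcal F_X^d$ and $\psi\in\mathcal F_Y^d$ is centered, the population loss is
\[
    L(\phi,\psi)=\tr{\Sigma_\phi\Sigma_\psi}-2\E{\phi(X)^\top\psi(Y)}
    +2\,\E{\phi(X)}^\top\E{\psi(Y)},
\]
with the last term equal to zero. Hence
\[
    \widehat L_n-L
    =\bigl(\tr{\widehat\Sigma_\phi\widehat\Sigma_\psi}-\tr{\Sigma_\phi\Sigma_\psi}\bigr)
    +2\bigl(\overline\phi_n^\top\overline\psi_n-0\bigr)
    -2\Bigl(\tfrac1n\textstyle\sum_i\phi(X_i)^\top\psi(Y_i)-\E{\phi(X)^\top\psi(Y)}\Bigr).
\]
All coordinates are bounded by $B$. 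For the trace term we use
$\widehat\Sigma_\phi\widehat\Sigma_\psi-\Sigma_\phi\Sigma_\psi=(\widehat\Sigma_\phi-\Sigma_\phi)\widehat\Sigma_\psi+\Sigma_\phi(\widehat\Sigma_\psi-\Sigma_\psi)$. Every entry of $\widehat\Sigma_\psi$ and $\Sigma_\phi$ is at most $B^2$, so this term is bounded by $d^2B^2$ times the maximum over $(j,k)$ of the entrywise deviations $\sup|(\widehat\Sigma_\phi-\Sigma_\phi)_{jk}|$ and the analogous quantity for $\psi$. The mean term is bounded by $dB\max_j\sup|\overline\phi_{n,j}-\E{\phi_j}|$, using $|\overline\psi_{n,j}|\le B$. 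The cross term is bounded by $d\max_j\sup|\tfrac1n\sum_i\phi_j(X_i)\psi_j(Y_i)-\E{\phi_j\psi_j}|$.

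We then reduce each entrywise supremum to a Rademacher bound. There are at most $O(d^2)$ scalar suprema. Each is over a class of functions bounded by $B^2$, so McDiarmid's inequality with a union bound at level $\delta/(8d^2)$ gives a deviation of $2\mathfrak R_n(\cdot)+B^2\sqrt{\log(8d^2/\delta)/n}$ up to constants. The classes involved are:
\begin{itemize}
\item the linear class $\mathcal F_X$, where the convex hull does not increase Rademacher complexity, so $\mathfrak R_n(\mathcal F_X)=\mathfrak R_n(\mathcal H_X)$;
\item the product classes $\{fg:f,g\in\mathcal F_X\}$, $\{fg:f,g\in\mathcal F_Y\}$, and the cross class $\{(x,y)\mapsto f(x)g(y):f\in\mathcal F_X,\,g\in\mathcal F_Y\}$ on the paired sample.
\end{itemize}

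For the product classes we write $fg=\tfrac14\bigl((f+g)^2-(f-g)^2\bigr)$. The map $u\mapsto u^2$ is $4B$-Lipschitz on $[-2B,2B]$, so Talagrand's contraction lemma gives $\mathfrak R_n\le 2B\,\mathfrak R_n(\mathcal F+\mathcal F)\le 4B\,\mathfrak R_n(\mathcal H)$. For the cross class we apply the same polarization to $f(x)\pm g(y)$. The Rademacher complexity of $\{f(x)+g(y)\}$ on paired points is at most $\mathfrak R_n(\mathcal H_X)+\mathfrak R_n(\mathcal H_Y)$, since the supremum separates and each factor is bounded by its worst-case complexity, evaluated at the marginal sample. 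Collecting terms yields $C_{d,B}\Delta_n(\delta)$ with $C_{d,B}$ polynomial in $d$ and $B$.

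The main obstacle is the bilinear cross term and the product terms. Talagrand's contraction handles them cleanly only after the polarization trick, and one must check that the worst-case complexity $\mathfrak R_n$, a supremum over samples, is what makes the paired-sample reduction to marginal classes valid. A secondary point is that the product-of-means term must be handled via $|\overline\phi_{n,j}|$ deviations, since it is a product of two small quantities, not a product of bounded functions. I expect the union-bound count $8d^2$ to match once the $O(d^2)$ entries and the few mean terms are enumerated with appropriate $\delta$ splitting.
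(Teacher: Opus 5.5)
Your proposal is correct and follows essentially the same route as the paper's proof: the same three-term decomposition, convex-hull invariance of Rademacher complexity, polarization plus Lipschitz contraction of the square map for the product and cross classes, a uniform law with a union bound, and entrywise propagation through the trace, mean-product, and joint terms. Your concern about the $8d^2$ count does not arise: every coordinate ranges over the same class $\mathcal F_X$ or $\mathcal F_Y$, so only five scalar suprema are needed (as in the paper), and any constant-factor mismatch inside the logarithm is absorbed into $C_{d,B}$.
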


Thus empirical and population loss are uniformly close over the fitted
class, and an approximate empirical minimizer competes with the best
population pair in that class up to estimation and optimization error. In
particular, the uniform gap vanishes whenever
$\mathfrak R_n(\mathcal H_X)+\mathfrak R_n(\mathcal H_Y)\to0$. Applying the
result before any well-defined final balancing also controls the returned
model, because balancing preserves the represented kernel and hence both
losses. A proof is given in Appendix~\ref{app:finite_sample_proofs}.

\section{Experiments}
\label{sec:experiments}

We evaluate \ourmethod\ on controlled synthetic problems for which the
density ratio and its singular structure are known. Full training details
are provided in Appendix~\ref{app:experiments}.

\subsection{Recovering the density ratio kernel on synthetic problems}
\label{sec:kernel_recovery}

We first ask whether \ourmethod\ recovers a known low-rank density ratio
kernel, under three constructions that each stress a different aspect of
the estimation problem: closely spaced singular values that require
recovering a shared subspace rather than individual functions
(Section~\ref{sec:rank_three_experiment}), a discontinuous
piecewise-constant kernel that tests the adaptive partitioning of the tree
weak learners (Section~\ref{sec:discontinuous_regions}), and a
high-dimensional tabular input with irrelevant coordinates that tests
automatic feature selection (Section~\ref{sec:tabular_irrelevant_features}).
Each figure illustrates a single representative fit, following the
same train--validation protocol; the baseline comparison tables in each of
the first three subsections instead report the mean and $95\%$ confidence
interval over $10$ independent data draws, to assess whether the
differences between methods are distinguishable from sampling
variability.

Besides validation loss and kernel RMSE (the RMSE of $\widehat{\kp}$), we
report the Euclidean error of the singular values,
$\lVert\widehat{\boldsymbol{\sigma}}-\boldsymbol{\sigma}\rVert_2$.
Subspace error is the average normalized Frobenius distance between the
estimated and exact left and right projection matrices on the evaluation
grid. Orthogonality error is the average normalized Frobenius distance
between the empirical Gram matrices of the left and right singular
functions and the identity.

\subsubsection{Close singular values (rank-three)}
\label{sec:rank_three_experiment}

We consider a setting in which the leading singular subspace, rather
than a single singular function, must be recovered. The marginals remain
uniform on $[-1,1]$, and we set
\[
    e_1(t)=\sqrt{2}\sin(\pi t),
    \qquad
    e_2(t)=\sqrt{2}\cos(\pi t),
    \qquad
    e_3(t)=\sqrt{2}\sin(2\pi t),
\]
and
\[
    \kp(x,y)
    =
    1+\sum_{j=1}^{3}\sigma_j e_j(x)e_j(y),
    \qquad
    (\sigma_1,\sigma_2,\sigma_3)=(0.18,0.16,0.12).
\]
The functions $\{e_j\}_{j=1}^3$ are orthonormal under both marginals.
Moreover,
$\kp(x,y)\geq 1-2\sum_{j=1}^3\sigma_j=0.08$, so this construction defines
a valid density ratio. The close singular values make the individual
singular functions sensitive to perturbations and motivate evaluating the
three-dimensional singular subspaces.

\begin{figure}[htbp]
    \centering
    \includegraphics[width=\linewidth]
        {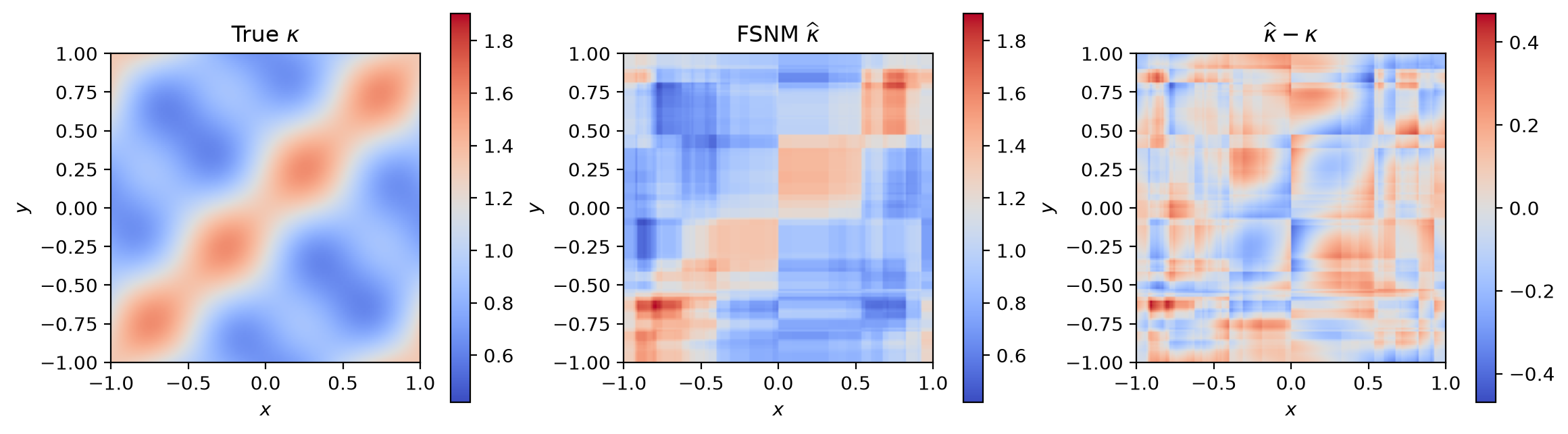}
    \caption{Rank-three synthetic experiment with close singular values.
    From left to right: the exact density ratio $\kp$, the \ourmethod\
    estimate $\widehat{\kp}$, and the signed error
    $\widehat{\kp}-\kp$.}
    \label{fig:rank_three_estimate}
\end{figure}

We draw candidate pairs from $P_X\otimes P_Y$ and accept $(x,y)$ with
probability
\[
    \frac{\kp(x,y)}{1+2\sum_{j=1}^3\sigma_j}.
\]
This rejection sampler produces i.i.d.\ observations from $P_{X,Y}$. We fit
a rank-$d=3$ model and evaluate it on a $160\times160$ uniform grid over
$[-1,1]^2$. The training details are reported in
Appendix~\ref{app:experiments}.

This single representative fit has validation loss $-0.0585$ and
orthogonality error $0.0209$; the multi-seed comparison against three
baselines is reported separately in Table~\ref{tab:rank3_baselines}
below. Its estimated spectrum is
$(0.1690,0.1386,0.1120)$, close to the population spectrum
$(0.18,0.16,0.12)$. Figure~\ref{fig:rank_three_estimate} shows that the
estimate recovers the interaction structure of the exact density ratio,
while retaining the piecewise-constant structure of the tree learners.
Figure~\ref{fig:rank_three_loss} shows the training and validation losses
over all $40$ iterations and marks the selected iteration.

\begin{figure}[htbp]
    \centering
    \includegraphics[width=0.55\linewidth]
        {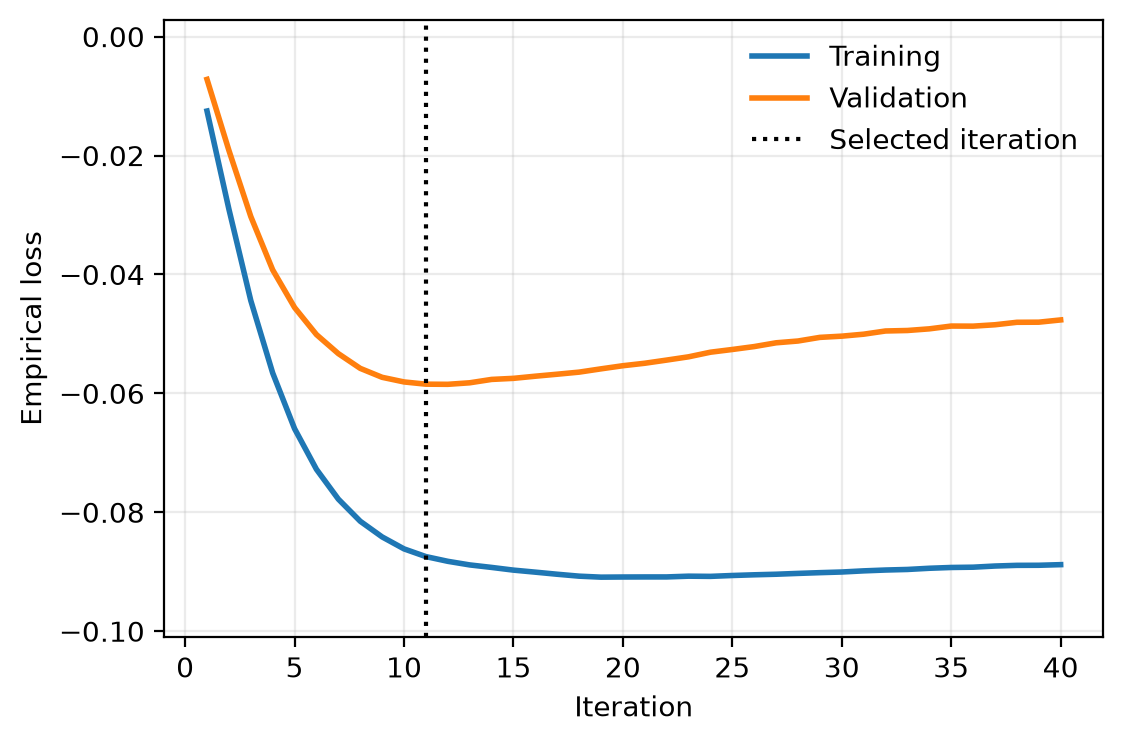}
    \caption{Empirical training and validation losses over $40$ boosting
    iterations. The vertical line marks the iteration selected by validation.}
    \label{fig:rank_three_loss}
\end{figure}

As a point of comparison, we fit three additional baselines on the
same training and validation samples: ACE \citep{breiman1985estimating};
uLSIF \citep{kanamori2009least}, which targets the full density ratio
$\kp$ pointwise without a low-rank factorization; and regularized kernel
CCA \citep{bach2002kernel}. Hyperparameters not already fixed by the
selected \ourmethod\ configuration are chosen for each baseline by its own
analogous validation criterion; complete implementation details are given
in Appendix~\ref{app:experiments}.
Table~\ref{tab:rank3_baselines} reports results over the $10$
replicates described above.
\ourmethod\ and uLSIF are statistically tied for the lowest kernel RMSE;
\ourmethod\ and ACE are tied for the lowest spectrum error; and \ourmethod,
ACE, and Kernel CCA are all tied for the lowest subspace error, which
varies more across replicates than the other two metrics. uLSIF does not
expose a spectral decomposition, so it has no spectrum or subspace error.

\begin{table}[htbp]
    \centering
    \caption{Baseline comparison on the rank-three synthetic experiment,
    over $10$ independent data draws (mean $\pm$ $95\%$ CI). Bold entries
    are statistically tied for best in that column, i.e., their interval
    overlaps the best mean's interval. uLSIF does not factorize
    $\widehat\kp$, so it has no associated spectrum or subspace error.}
    \label{tab:rank3_baselines}
    \small
    \begin{tabular}{lccc}
        \toprule
        Method & Kernel RMSE & Spectrum error & Subspace error \\
        \midrule
        \ourmethod  & $\mathbf{0.1261 \pm 0.0075}$ & $\mathbf{0.0286 \pm 0.0058}$ & $\mathbf{0.3579 \pm 0.0242}$ \\
        ACE         & $0.1489 \pm 0.0046$          & $\mathbf{0.0238 \pm 0.0048}$ & $\mathbf{0.4010 \pm 0.0322}$ \\
        uLSIF       & $\mathbf{0.1149 \pm 0.0044}$ & --                            & --                            \\
        Kernel CCA  & $0.1756 \pm 0.0198$           & $0.0720 \pm 0.0196$          & $\mathbf{0.3613 \pm 0.0663}$ \\
        \bottomrule
    \end{tabular}
\end{table}

\subsubsection{Discontinuous regional structure}
\label{sec:discontinuous_regions}

This setting tests whether the adaptive splits of the tree weak
learners can discover a piecewise-constant kernel on their own. Let both
marginals be uniform on $[-1,1]$ and partition this interval at
$-0.5$, $0$, and $0.5$. For the four resulting regions, define the three
contrasts as the rows of
\[
    H=
    \begin{pmatrix}
        1& 1& 1\\
        1&-1&-1\\
       -1& 1&-1\\
       -1&-1& 1
    \end{pmatrix}.
\]
If $r(t)$ is the region containing $t$, set
$h(t)=H_{r(t),:}$. The coordinates of $h$ are centered and orthonormal under
the uniform marginal. We use the piecewise-constant density ratio
\[
    \kp(x,y)
    =1+\sum_{j=1}^3\sigma_j h_j(x)h_j(y),
    \qquad
    (\sigma_1,\sigma_2,\sigma_3)=(0.35,0.25,0.15).
\]
Since $\kp\geq1-\sum_j\sigma_j=0.25$, this defines a valid joint
distribution with an exactly known rank-three centered kernel. We
use the same train--validation protocol as
Section~\ref{sec:rank_three_experiment}, with implementation details in
Appendix~\ref{app:experiments}.

Validation selects iteration $15$. The estimated spectrum of this
representative fit is
$(0.3514,0.2589,0.1667)$; the multi-seed comparison against three
baselines is reported separately in Table~\ref{tab:region_baselines}
below. Figure~\ref{fig:discontinuous_regions} shows that the fitted tree expansion
recovers both the rectangular partition and the interaction pattern without
being supplied the cut locations.

\begin{figure}[htbp]
    \centering
    \includegraphics[width=\linewidth]
        {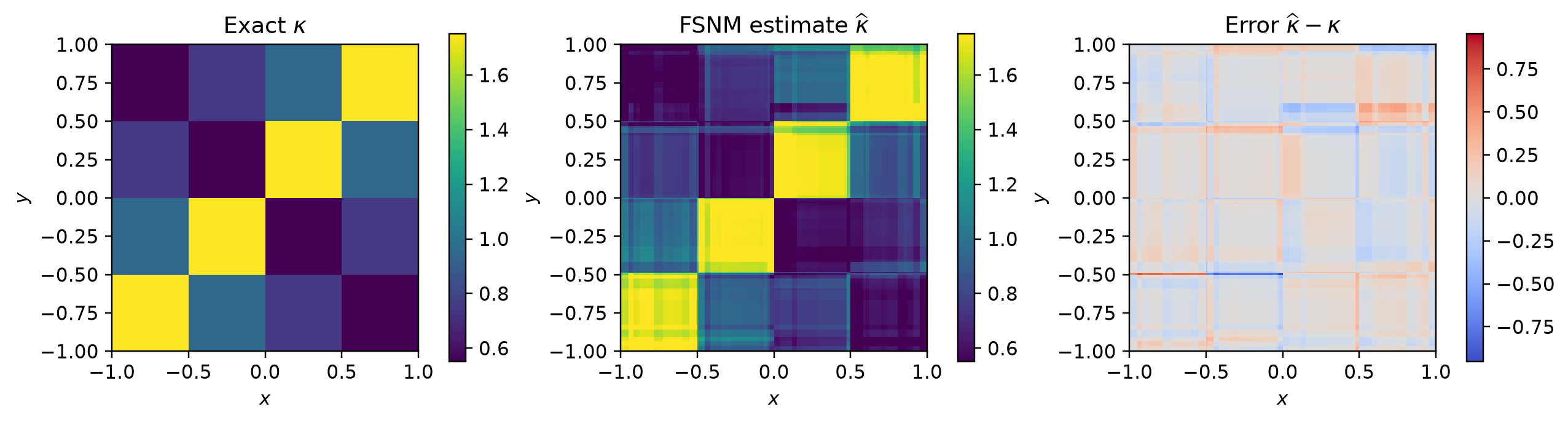}
    \caption{Discontinuous regional experiment. From left to right: the
    exact piecewise-constant density ratio, the FSNM estimate using tree weak
    learners, and their signed difference.}
    \label{fig:discontinuous_regions}
\end{figure}

We repeat the baseline comparison from
Section~\ref{sec:rank_three_experiment} on this discontinuous kernel,
matching each method's own configuration except for the tree weak-learner
budget, which uses maximum depth $3$ and minimum leaf size $120$ for both
\ourmethod\ and ACE, as in Appendix~\ref{app:experiments}.
Table~\ref{tab:region_baselines} shows that \ourmethod\ and ACE are
statistically tied for the lowest error on every metric, and both clearly
outperform the two Gaussian-kernel baselines: uLSIF and kernel CCA are not
well suited to a piecewise-constant density ratio, since their smooth
kernels cannot represent the sharp regional boundaries that the tree-based
methods recover by adaptive partitioning.

\begin{table}[htbp]
    \centering
    \small
    \caption{Baseline comparison on the discontinuous regional experiment
    (mean $\pm$ $95\%$ CI, bold = tied for best, as in
    Table~\ref{tab:rank3_baselines}).}
    \label{tab:region_baselines}
    \begin{tabular}{lccc}
        \toprule
        Method & Kernel RMSE & Spectrum error & Subspace error \\
        \midrule
        \ourmethod  & $\mathbf{0.1393 \pm 0.0198}$ & $\mathbf{0.0301 \pm 0.0113}$ & $\mathbf{0.2678 \pm 0.0846}$ \\
        ACE         & $\mathbf{0.1583 \pm 0.0152}$ & $\mathbf{0.0353 \pm 0.0066}$ & $\mathbf{0.2978 \pm 0.0742}$ \\
        uLSIF       & $0.2312 \pm 0.0065$          & --                            & --                            \\
        Kernel CCA  & $0.2908 \pm 0.0148$           & $0.0716 \pm 0.0287$          & $0.4878 \pm 0.0317$          \\
        \bottomrule
    \end{tabular}
\end{table}

\subsubsection{Tabular inputs with irrelevant features}
\label{sec:tabular_irrelevant_features}

This setting tests whether the fitted kernel automatically ignores
input coordinates that play no role in the joint law. We next let $X$ be
uniform on $[-1,1]^{20}$ and retain a uniform scalar
marginal for $Y$. Only the first three coordinates of $X$ affect the joint
law. In particular, define
\[
    a(X)=
    \bigl(\operatorname{sign}(X_1),\operatorname{sign}(X_2),
    \operatorname{sign}(X_3)\bigr)
\]
and use the same response-side regional contrasts $h(Y)$ as above. The
density ratio is
\[
    \kp(x,y)
    =1+\sum_{j=1}^3\sigma_j a_j(x)h_j(y),
    \qquad
    (\sigma_1,\sigma_2,\sigma_3)=(0.30,0.22,0.14).
\]
Thus $X_4,\ldots,X_{20}$ are independent noise coordinates, while the exact
kernel and spectrum remain available for evaluation. Implementation
details, shared with the preceding regional experiment, are given in
Appendix~\ref{app:experiments}.

\begin{figure}[htbp]
    \centering
    \includegraphics[width=\linewidth]
        {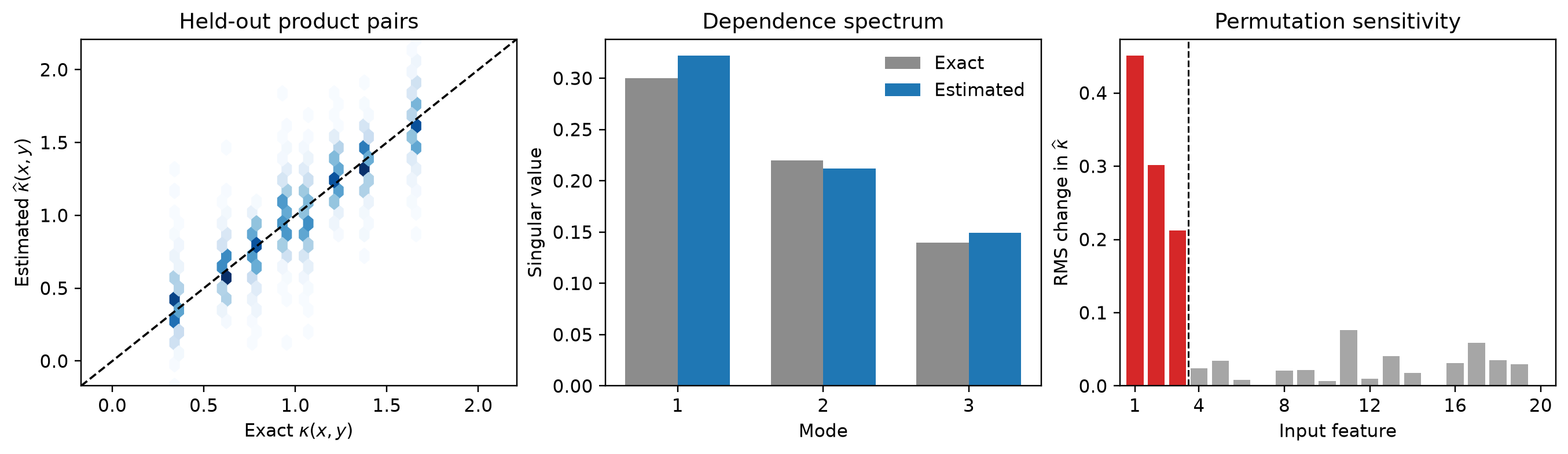}
    \caption{Twenty-dimensional tabular experiment. Left: exact and
    estimated density ratios on held-out product pairs. Center: exact and
    estimated spectra. Right: post-fit permutation sensitivity; red bars are
    the three relevant inputs and gray bars are the $17$ irrelevant inputs.}
    \label{fig:tabular_irrelevant_features}
\end{figure}

Validation selects iteration $32$. On $10{,}000$ independent pairs from the
product of the marginals, this representative fit has estimated spectrum
$(0.3220,0.2122,0.1491)$; the multi-seed comparison against three
baselines is reported separately in Table~\ref{tab:tabular_baselines}
below. We also compute a post-fit permutation sensitivity for
each input coordinate,
\[
    I_k
    =\left(
      \frac{1}{m}\sum_{i=1}^m
      \left[
        \widehat\kp(X_i,Y_i)
        -\widehat\kp(X_i^{\pi_k},Y_i)
      \right]^2
    \right)^{1/2},
\]
where $X^{\pi_k}$ independently permutes column $k$ of the evaluation input.
This diagnostic is computed after training and does not enter the fitting
objective. Its mean is $0.3214$ for the three relevant variables and
$0.0244$ for the $17$ irrelevant variables.

Figure~\ref{fig:tabular_irrelevant_features} reports the held-out kernel
estimates, recovered spectrum, and all permutation sensitivities. The three
coordinates used by the population kernel have the three largest
sensitivities, while the noise coordinates have little influence on the
fitted interaction.

Table~\ref{tab:tabular_baselines} repeats the comparison on the
twenty-dimensional tabular setting, evaluated on freshly sampled
held-out product pairs each time. The two Gaussian-kernel
baselines are substantially worse than both tree-based methods: their
kernel distance combines all twenty input coordinates without automatic
feature selection, so the seventeen irrelevant coordinates dilute the
signal from the three relevant ones. \ourmethod\ and ACE, whose weak
learners split on individual coordinates, are far less sensitive to this
irrelevant-feature noise, and are statistically tied for the lowest error
on every metric.

\begin{table}[htbp]
    \centering
    \small
    \caption{Baseline comparison on the twenty-dimensional tabular
    experiment with seventeen irrelevant input coordinates (mean $\pm$
    $95\%$ CI, bold = tied for best, as in Table~\ref{tab:rank3_baselines}).}
    \label{tab:tabular_baselines}
    \begin{tabular}{lccc}
        \toprule
        Method & Kernel RMSE & Spectrum error & Subspace error \\
        \midrule
        \ourmethod  & $\mathbf{0.1880 \pm 0.0227}$ & $\mathbf{0.0363 \pm 0.0114}$ & $\mathbf{0.4689 \pm 0.1062}$ \\
        ACE         & $\mathbf{0.1818 \pm 0.0152}$ & $\mathbf{0.0375 \pm 0.0065}$ & $\mathbf{0.3754 \pm 0.0689}$ \\
        uLSIF       & $0.7102 \pm 0.0042$          & --                            & --                            \\
        Kernel CCA  & $0.6651 \pm 0.0180$           & $0.3618 \pm 0.0117$          & $0.6994 \pm 0.0088$          \\
        \bottomrule
    \end{tabular}
\end{table}

\subsection{Querying the fitted kernel}
\label{sec:kernel_querying}

Having fit a single kernel above, we next show that it answers several
different downstream queries without refitting: conditional moments and a
tail probability (Section~\ref{sec:conditional_queries}), and the full
conditional distribution with bootstrap uncertainty bands
(Section~\ref{sec:conditional_cdf_uncertainty}). Both use the tree-based
fit from Section~\ref{sec:rank_three_experiment}.

\subsubsection{Conditional mean, variance, and tail probability}
\label{sec:conditional_queries}

As an application of the fitted kernel above, $\widehat{\kp}$ can be used to
approximate the conditional expectation of any integrable query $f$ through
\[
    \E{f(Y)\mid X=x}
    =
    \int f(y)\kp(x,y)\,\mathrm{d}P_Y(y).
\]
Using the response coordinates $\{Y_i\}_{i=1}^n$ of the same paired training
sample used to fit $\widehat{\kp}$, we use
\[
    \widehat m_f(x)
    =
    \frac{1}{n}\sum_{i=1}^n
    f(Y_i)\widehat{\kp}(x,Y_i).
\]
No additional model is fitted for a new $f$. We evaluate the conditional mean
using $f(y)=y$, the conditional variance obtained from the queries $y$ and
$y^2$, and the conditional tail probability obtained from
$f(y)=\Ind{y>0.5}$. Their RMSEs over the evaluation grid are,
respectively, $0.0237$, $0.0166$, and $0.0233$.
Figure~\ref{fig:conditional_queries} shows that the three functionals are
recovered by the same learned kernel.

\begin{figure}[htbp]
    \centering
    \includegraphics[width=\linewidth]
        {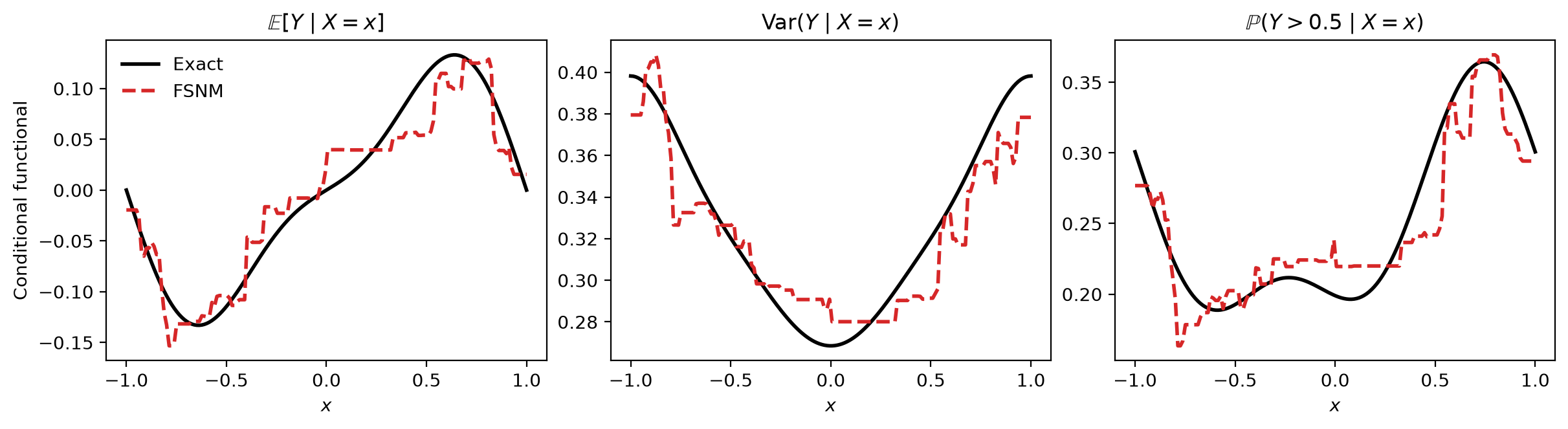}
    \caption{Conditional queries obtained from the learned rank-three kernel.
    From left to right: conditional mean, conditional variance, and conditional
    tail probability. Solid black curves are exact and dashed red curves are
    computed from $\widehat{\kp}$.}
    \label{fig:conditional_queries}
\end{figure}

\subsubsection{Conditional CDF and bootstrap uncertainty}
\label{sec:conditional_cdf_uncertainty}

The conditional CDF collects a continuum of queries from the learned kernel:
for $t\in[-1,1]$,
\[
    F_x(t)
    \coloneqq
    \Pr{Y\leq t\mid X=x}
    =
    \int \Ind{y\leq t}\kp(x,y)\,\mathrm dP_Y(y).
\]
For fixed $x$, using the same training responses as above, we first
evaluate the indicator query through the direct empirical operator.


No individual kernel weight is truncated or renormalized. Because the
finite-sample signed curve $t\mapsto\widetilde F_x(t)$ need not be monotone or
remain in $[0,1]$, we project the complete curve by isotonic regression onto
the class of valid CDFs, anchored at zero and one. This downstream projection
enforces distributional coherence without changing the learned kernel or its
direct action on an individual query.

We evaluate the CDF at $x\in\{-0.6,0,0.6\}$ and quantify uncertainty
with $100$ nonparametric bootstrap replicates. Each replicate resamples the
paired observations, refits the kernel, evaluates the direct indicator
queries, and projects the resulting complete curves, following the protocol
in Appendix~\ref{app:experiments}.

\begin{figure}[htbp]
    \centering
    \includegraphics[width=\linewidth]
        {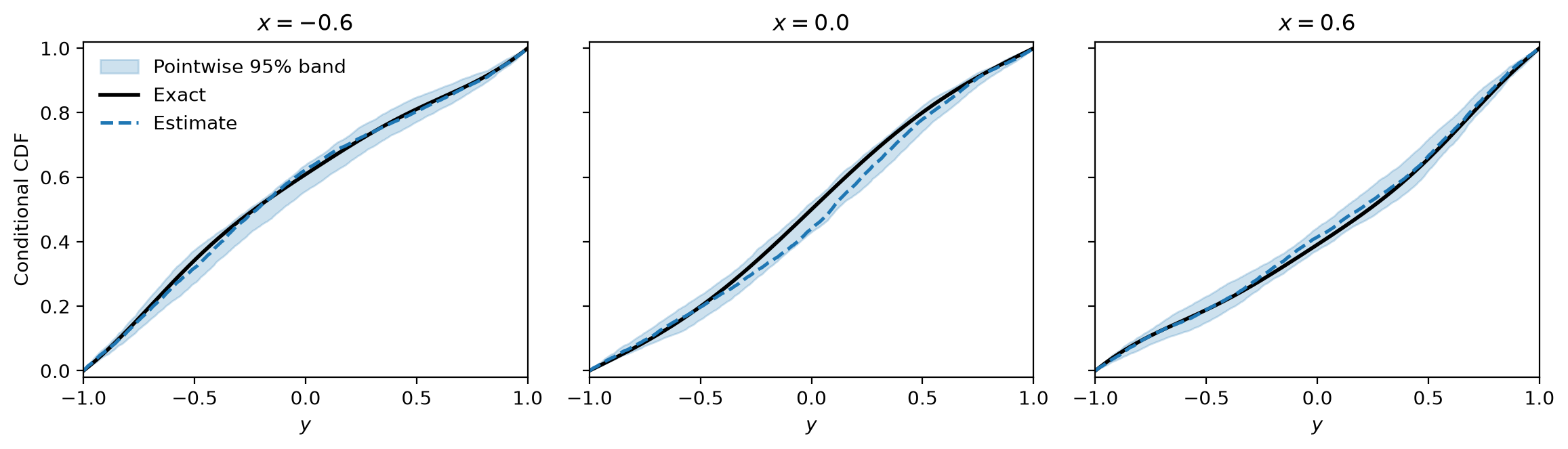}
    \caption{Projected direct conditional CDF estimates and pointwise
    bootstrap bands at three
    values of $x$ using tree weak learners. Solid black curves are exact,
    dashed blue curves are the fitted estimates, and shaded regions are
    pointwise $95\%$ bands.}
    \label{fig:tree_conditional_cdf_uncertainty}
\end{figure}

Figure~\ref{fig:tree_conditional_cdf_uncertainty} shows that the estimated
conditional distributions track the exact CDFs at all three values of $x$.
The bands have average width $0.0655$, and their gridwise coverage of the exact
curves is at least $99.7\%$ at each displayed value of $x$. Thus the bootstrap
captures the joint sampling variability of the learned tree kernel, its direct
action on the indicator queries, and the subsequent CDF projection.

\subsection{Detecting dependence beyond linear correlation}
\label{sec:dependence_detection}

As a further application of the learned kernel, we use it to distinguish independence from
dependence. The two examples have the same marginals: $X$ is uniform on
$[-1,1]$ and $Y$ has the distribution of $U^2+\epsilon$, where
$U\sim\operatorname{Unif}[-1,1]$ and
$\epsilon\sim\mathcal N(0,0.08^2)$. Only the coupling between $X$ and $Y$
changes. In both cases, we fit a rank-three model using shallow regression
trees as weak learners, with the number of boosting iterations selected by
validation. Alongside the estimated kernel and spectrum, we report a held-out
permutation test based on the test-sample dependence score. Its null
distribution is obtained by permuting the fitted $Y$-factors across the test
observations. Further details are given in
Appendix~\ref{app:experiments}.

\subsubsection{Independent variables}
\label{sec:independent_variables}

Let $X$ and $U$ be independent and set $Y=U^2+\epsilon$. Then
$X\independent Y$, so $\kp=1$ and the centered kernel $\kp_0=\kp-1$ vanishes.
Figure~\ref{fig:independent_case} shows that the learned
$\widehat\kp$ concentrates around one on independent reference pairs, while
its singular values concentrate near zero. The estimated spectrum is
$(0.0114,0.0101,0.0082)$, the grid RMS of $\widehat\kp_0$ is $0.0209$, and
the permutation test does not reject independence ($p=0.965$).

\begin{figure}[htbp]
    \centering
    \includegraphics[width=0.82\linewidth]
        {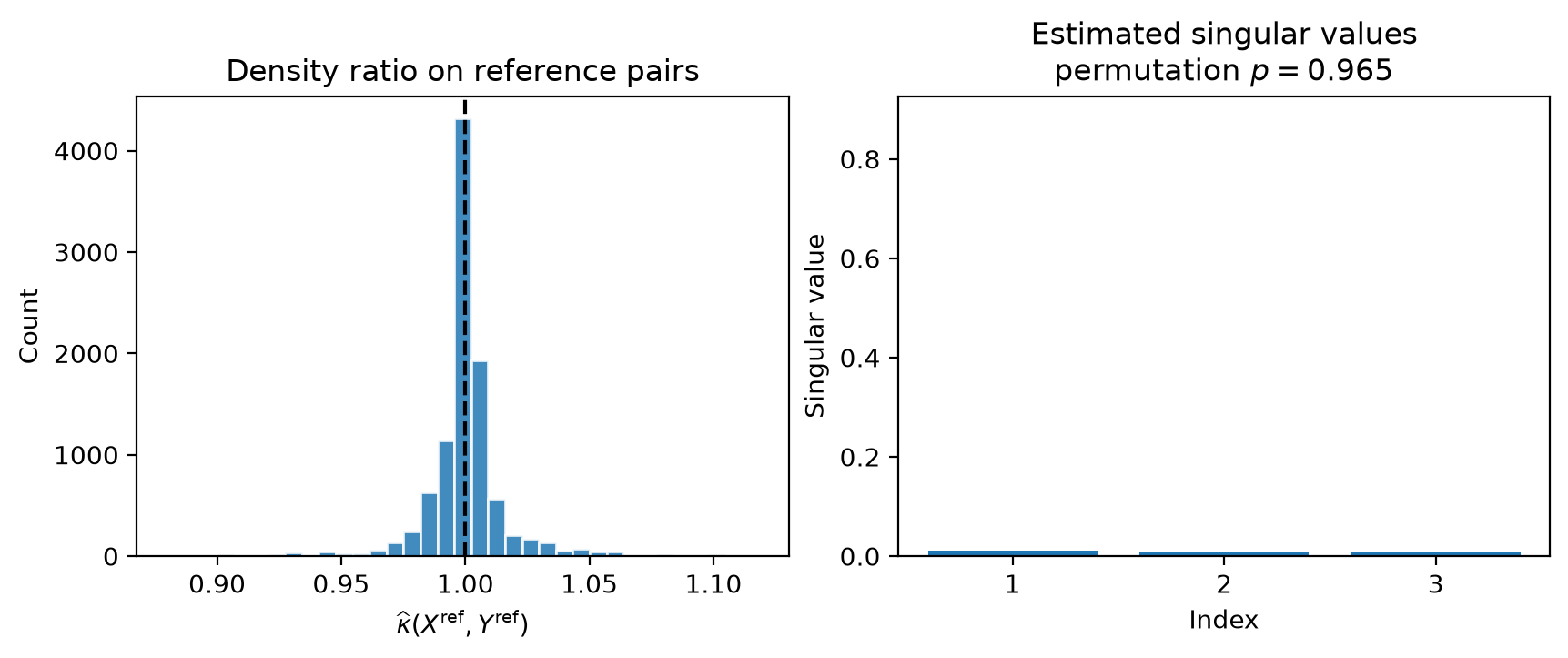}
    \caption{Independent case. Left: distribution of $\widehat\kp$ over
    independent reference pairs, with the dashed line marking the population
    value one. Right: estimated singular values, on the same vertical scale
    as Figure~\ref{fig:nonlinear_dependence}.}
    \label{fig:independent_case}
\end{figure}

The three diagnostics give the same conclusion. The density ratio estimates
remain close to the independence value, the spectral contribution is small,
and the permutation test finds no evidence against independence. Moreover,
validation selects only one boosting iteration. Thus, even with adaptive tree
learners, the fitted model does not introduce a material interaction when the
joint distribution is the product of its marginals.

\subsubsection{Nonlinear dependence with zero correlation}
\label{sec:nonlinear_dependence}

We now set $Y=X^2+\epsilon$. Although $X$ and $Y$ are dependent, symmetry
gives $\operatorname{Cov}(X,Y)=0$. In the test sample, Pearson's correlation
is $-0.0191$, while the permutation test detects dependence ($p=0.001$).
Figure~\ref{fig:nonlinear_dependence} shows the nonlinear structure recovered
by $\widehat\kp_0$ and the clearly nonzero estimated spectrum
$(0.8579,0.7659,0.5186)$.

\begin{figure}[htbp]
    \centering
    \includegraphics[width=\linewidth]
        {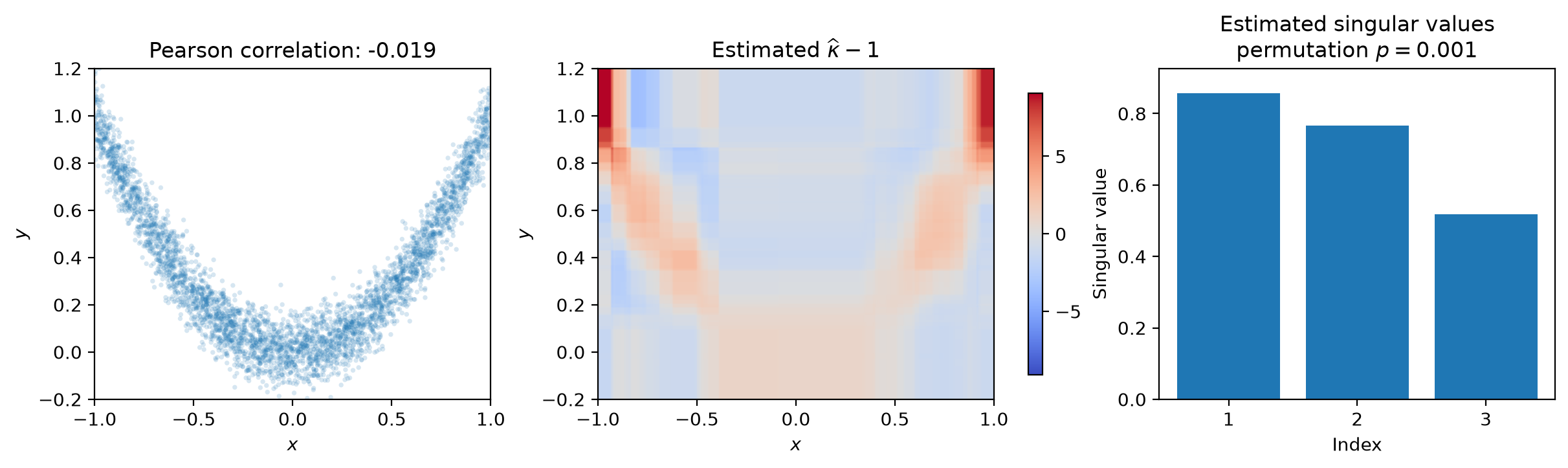}
    \caption{Nonlinear dependent case $Y=X^2+\epsilon$. From left to right:
    test observations, learned centered density ratio kernel, and estimated
    singular values. Pearson's correlation is near zero, whereas the learned
    kernel and the held-out permutation test detect dependence.}
    \label{fig:nonlinear_dependence}
\end{figure}

Here the conclusion is reversed under the same marginals and tree class. The
symmetric U-shaped relation makes linear covariance vanish, explaining why
Pearson's correlation is uninformative. The learned kernel nevertheless
retains the nonlinear interaction, and its large singular values agree with
the permutation rejection. This example shows that the method detects a
general departure of the density ratio from one rather than only linear
association.

\subsection{\texorpdfstring{Real-data application: glass
composition--property data}{Real-data application: glass composition--property data}}
\label{sec:glass_experiment}

Predicting how a glass's physical and optical properties depend on its
oxide composition is central to glass discovery, since synthesizing and
characterizing a candidate composition is slow and costly
\citep{mazurin2005glass,thomaello2025patents,cassar2021predicting}.
Tree ensembles are particularly effective for sparse tabular composition
data \citep{grinsztajn2022tree,cassar2021predicting}. \ourmethod\ retains
this inductive bias while learning a reusable low-rank conditional kernel
rather than a single scalar predictor.

We use composition--refractive-index (RI) records integrated from
SciGlass \citep{mazurin2005glass}, INTERGLAD \citep{interglad}, and patent
data \citep{thomaello2025patents}. To reduce chemical heterogeneity, we
restrict attention to glasses whose nonzero composition is contained in the
ten-oxide family in Table~\ref{tab:glass_oxide_family}; every oxide outside
this set must have recorded molar fraction zero. The family spans classical
network formers, alkali modifiers, and components used in high-index optical
glass systems \citep{zachariasen1932atomic,mao2015optical,ma2015tantalum}.
Individual glasses need not contain all ten components.

\begin{table}[htbp]
    \centering
    \footnotesize
    \caption{Ten-oxide family retained for the real-data experiment.}
    \label{tab:glass_oxide_family}
    \begin{tabular}{lll}
        \toprule
        Group & Oxides & Motivation \\
        \midrule
        Network formers & SiO$_2$, B$_2$O$_3$, P$_2$O$_5$
          & Backbone-forming oxides \\
        Alkali modifiers & Li$_2$O, Na$_2$O, K$_2$O
          & Network modification and fluxing \\
        Optical-property components &
          TiO$_2$, Nb$_2$O$_5$, Ta$_2$O$_5$, La$_2$O$_3$
          & High-index optical-glass systems \\
        \bottomrule
    \end{tabular}
\end{table}

\subsubsection{\texorpdfstring{Learning the
composition--refractive-index kernel}{Learning the composition--refractive-index kernel}}
\label{sec:glass_kernel_fit}

After retaining RI in $[1,4.5]$, enforcing the zero restriction above,
and renormalizing the retained oxide fractions to sum to one, $3{,}013$
records remain. Their RI values range from $1.348$ to $2.480$. A fixed split
(seed $2026$) assigns $2{,}109/452/452$ observations to training,
validation, and test partitions. We select among ranks $\{5,10,20\}$,
maximum tree depths $\{3,5\}$, and minimum leaf sizes $\{25,50,100\}$, using
step size $0.1$, at most $80$ iterations, and patience $10$. Validation loss
selects rank $20$, depth $5$, leaf size $25$, and iteration $39$
(validation loss $-5.6476$). Selection by validation CRPS gives the same
configuration.

Figure~\ref{fig:glass_kernel_fit} shows the selected trajectory and
spectrum. The captured energy $\sum_j\widehat\sigma_j^2$ is $4.5961$, and
the effective spectral rank is $11.41$. The first mode accounts for $19.1\%$
of the captured energy, the first ten account for $78.8\%$, and $14$ modes
are required to exceed $90\%$. Thus the restricted chemical family retains
a genuinely multivariate dependence structure, while concentrating more of
its energy in the leading modes than the heterogeneous full table.

\begin{figure}[htbp]
    \centering
    \includegraphics[width=\linewidth]
        {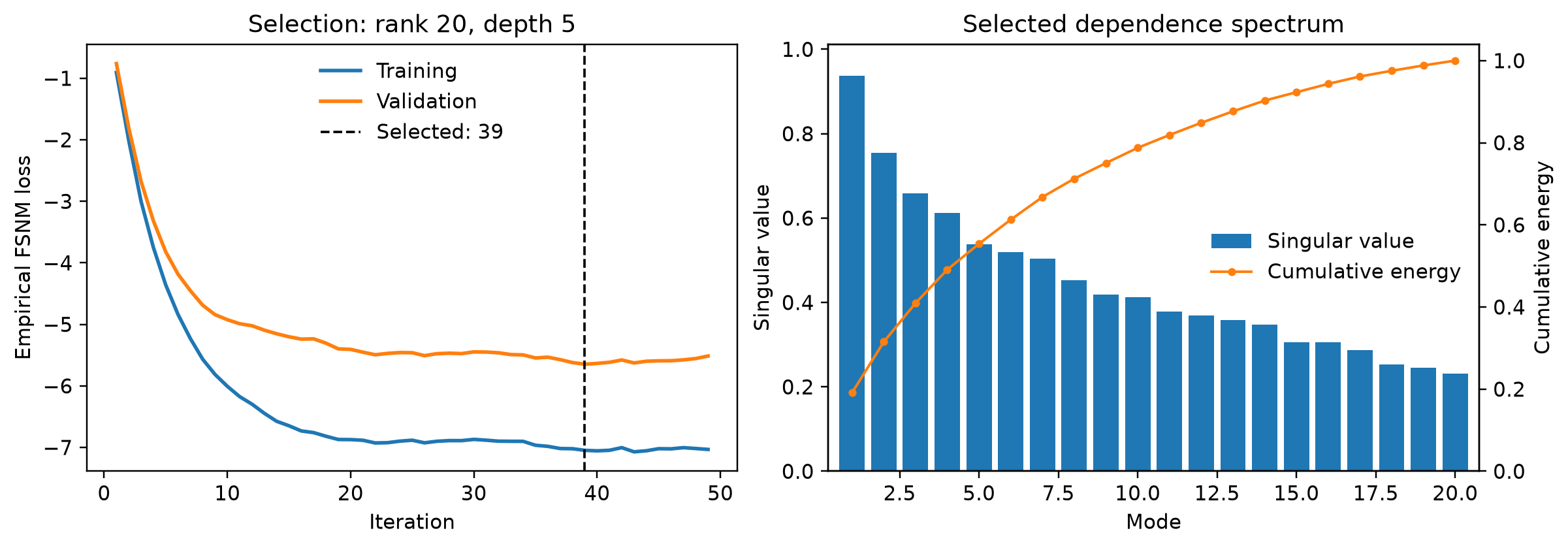}
    \caption{Fit of the restricted glass composition--RI. Left:
    training and validation losses for the selected rank-$20$ trajectory;
    early stopping chooses iteration $39$. Right: fitted singular values and
    cumulative share of $\sum_j\widehat\sigma_j^2$.}
    \label{fig:glass_kernel_fit}
\end{figure}

\subsubsection{RI prediction with the identity functional}
\label{sec:glass_identity_prediction}

We apply the plug-in estimator from
Section~\ref{sec:conditional_queries}, using the $M$ training responses as
the empirical marginal sample and taking the identity query $g(y)=y$. This
requires neither refitting nor constructing a separate scalar regressor. On
the test partition, the resulting predictor
has MSE $0.0051$, RMSE $0.0711$, MAE $0.0345$, and $R^2=0.811$. The
training-mean baseline has MSE $0.0268$, so the kernel reduces test MSE by
$81.1\%$. Figure~\ref{fig:glass_identity_prediction} shows the held-out
predictions and their calibration across prediction octiles.

\begin{figure}[htbp]
    \centering
    \includegraphics[width=\linewidth]
        {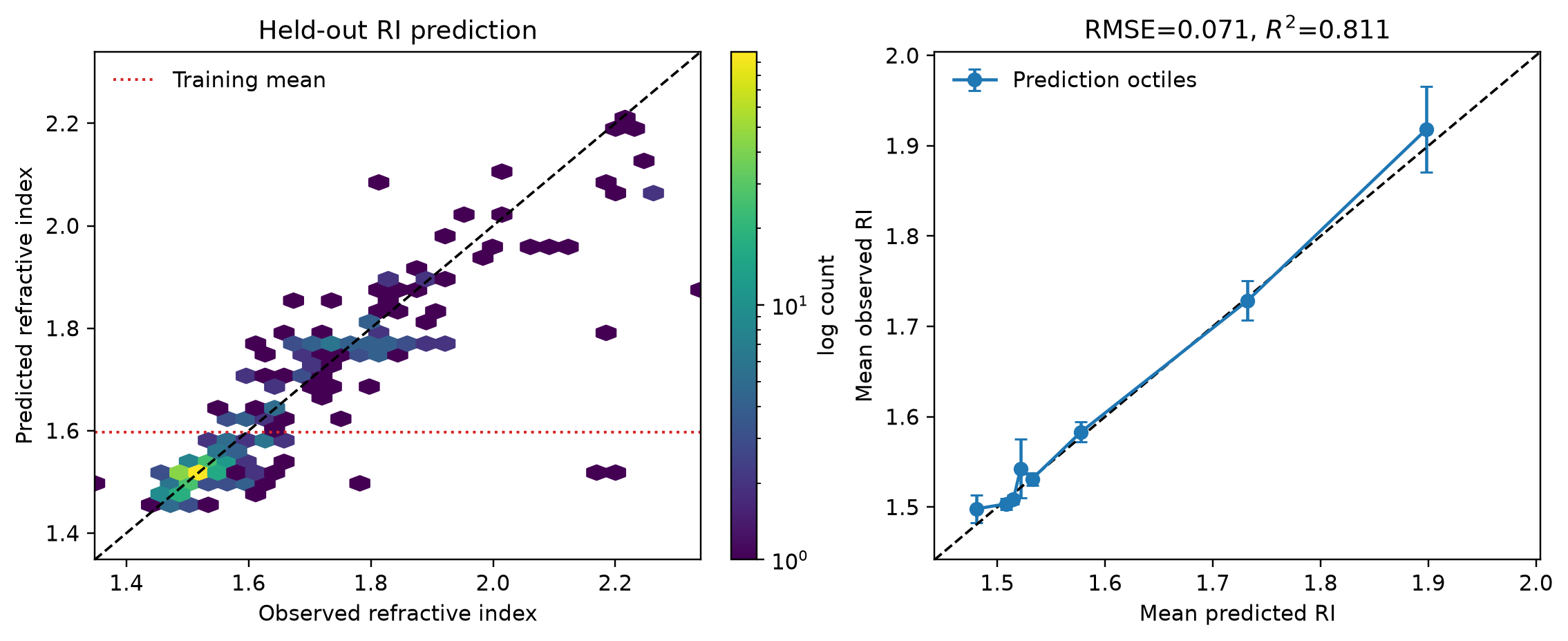}
    \caption{Prediction of RI using $g(y)=y$. Left: the dashed line is
    perfect prediction and the dotted line is the training mean. Right: mean
    observed and predicted within prediction octiles, with pointwise
    $95\%$ normal intervals for the observed means.}
    \label{fig:glass_identity_prediction}
\end{figure}

\subsubsection{Conditional intervals and screening probabilities}
\label{sec:glass_conditional_queries}

We compute the unprojected conditional CDF as in
Section~\ref{sec:conditional_cdf_uncertainty}, again using the $M$ training
responses. The screening probability at threshold $\tau$ is the plug-in
estimator from Section~\ref{sec:conditional_queries} applied to
$g(y)=\Ind{y>\tau}$; denote it by $\widetilde p_\tau(x)$. These estimates
apply the learned operator directly to the indicator query, without
truncating individual kernel weights. Because a finite-rank estimate may
nevertheless produce a nonmonotone signed
$\widetilde F_x$, we project the complete curve onto the set of CDFs by
isotonic regression. We then calibrate its quantile levels once using the
validation probability-integral-transform values and invert the resulting
CDF at test compositions. This post-processing does not refit the kernel or
use test responses. The scalar $\widetilde p_\tau(x)$ is only clipped at the
end to $[0,1]$; we use $\tau=1.8$, near the upper decile of the restricted
family.

On the $452$ test glasses, nominal $50\%$, $70\%$, and $90\%$ central
intervals achieve coverage $54.6\%$, $71.5\%$, and $89.4\%$, with mean
widths $0.043$, $0.069$, and $0.126$ RI units. For screening, the direct
indicator query has Brier score $0.0375$, compared with $0.1156$ for the
climatological predictor at the training exceedance rate $11.2\%$, a $67.6\%$
reduction. Figure~\ref{fig:glass_conditional_queries} shows the calibrated
intervals and the reliability of the uncalibrated direct screening query.

\begin{figure}[htbp]
    \centering
    \includegraphics[width=\linewidth]
        {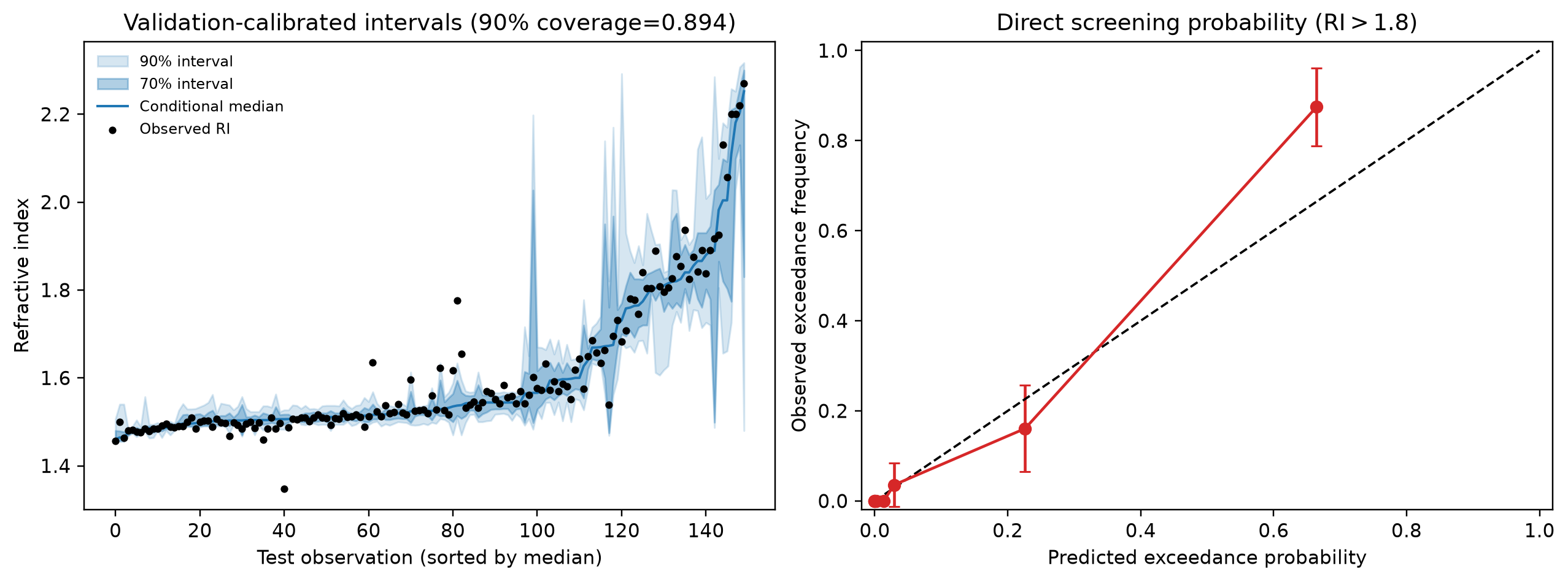}
    \caption{Conditional queries.
    Left: for
    a random subset of $150$ held-out compositions, the shaded bands are validation-calibrated $70\%$ and $90\%$
    intervals. Right: reliability of the direct indicator query
    $\widehat{\mathbb P}(\mathrm{RI}>1.8\mid x)$ in eight equal-count bins,
    with pointwise $95\%$ normal intervals for observed frequencies.}
    \label{fig:glass_conditional_queries}
\end{figure}
\section{Conclusion}
\label{sec:conclusion}

We introduced \ourmethod, which learns the leading singular structure of a
conditional expectation operator through alternating, learner-agnostic
functional block-Newton updates. We derived closed-form update directions,
established population-level descent with an $O(1/T)$ best-iterate
stationarity rate, and showed that every nondegenerate local minimum is a
globally optimal rank-$d$ approximation.

\ourmethod\ recovered synthetic density ratio kernels more accurately than
ACE, uLSIF, and kernel CCA, particularly where adaptive partitioning and
feature selection matter. On a real oxide-glass dataset, its fitted kernel
substantially produced calibrated
conditional prediction intervals and a well-discriminating direct screening
probability. This multi-query capability is shared with related
operator-learning approaches such as Neural Conditional Probability;
\ourmethod's distinguishing contribution is deriving it from a
closed-form, learner-agnostic block-Newton procedure with matching
convergence and landscape theory for tree-based weak learners.

Limitations include fixed-grid hyperparameter selection, hypothesis-class-dependent
finite-sample rates, and scalar responses. Adaptive rank selection,
structured response spaces, and the causal-inference and
reinforcement-learning applications discussed in
Section~\ref{sec:introduction} are natural extensions.

\section*{Acknowledgments}

This work was supported in part by the EU project ELIAS under grant agreement No. 101120237. Thiago was  supported by the São Paulo Research Foundation (FAPESP), grant 2026/21108-4.

\bibliographystyle{plainnat}
\bibliography{bibliography}

\appendix
\section{Proofs}
\label{app:proofs}

\subsection{Proofs for Section~\ref{sec:background}}
\label{app:proofs_model_objective}

\begin{proof}[Proof of Proposition~\ref{prop:population_loss}]
{ The squared approximation error can be written as
\[
    \|\kp_0-\kp_{\phi,\psi}\|_{P_X\otimes P_Y}^2
    =
    \|\kp_0\|_{P_X\otimes P_Y}^2
    +\|\kp_{\phi,\psi}\|_{P_X\otimes P_Y}^2
    -2\Ex{P_X\otimes P_Y}{
        \kp_0(X,Y)\kp_{\phi,\psi}(X,Y)
    }.
\]
Because $\kp_0=\kp-1$ and the factors are centered,
\begin{align*}
    \Ex{P_X\otimes P_Y}{
        \kp_0(X,Y)\kp_{\phi,\psi}(X,Y)
    }
    &=\Ex{P_X\otimes P_Y}{
        \kp(X,Y)\phi(X)^\top\psi(Y)
      }
      -\Ex{P_X}{\phi(X)}^\top\Ex{P_Y}{\psi(Y)}\\
    &=\Ex{P_{X,Y}}{\phi(X)^\top\psi(Y)}
    =A(\phi,\psi).
\end{align*}
Moreover, since $\phi(X)^\top\psi(Y)$ is scalar-valued and equals its
own transpose $\psi(Y)^\top\phi(X)$, and a scalar coincides with the trace
of the $1\times1$ matrix it forms,
\[
    \bigl(\phi(X)^\top\psi(Y)\bigr)^2
    =\bigl(\phi(X)^\top\psi(Y)\bigr)\bigl(\psi(Y)^\top\phi(X)\bigr)
    =\tr{\phi(X)^\top\bigl(\psi(Y)\psi(Y)^\top\bigr)\phi(X)}.
\]
The cyclic property of the trace moves $\phi(X)$ from the outside to the
inside of the product,
\[
    \tr{\phi(X)^\top\bigl(\psi(Y)\psi(Y)^\top\bigr)\phi(X)}
    =\tr{\bigl(\psi(Y)\psi(Y)^\top\bigr)\phi(X)\phi(X)^\top}.
\]
Taking expectations, using linearity of the trace, and the independence of
$X$ and $Y$ under $P_X\otimes P_Y$,
\begin{align*}
    C(\phi,\psi)
    &=\Ex{P_X\otimes P_Y}{\bigl(\phi(X)^\top\psi(Y)\bigr)^2}\\
    &=\tr{\Ex{P_Y}{\psi(Y)\psi(Y)^\top}\Ex{P_X}{\phi(X)\phi(X)^\top}}
    =\tr{\Sigma_\psi\Sigma_\phi}
    =\tr{\Sigma_\phi\Sigma_\psi},
\end{align*}
where the last equality applies the cyclic property of the trace once
more.
Substitution gives
\[
    \|\kp_0-\kp_{\phi,\psi}\|_{P_X\otimes P_Y}^2
    =
    \|\kp_0\|_{P_X\otimes P_Y}^2
    +C(\phi,\psi)-2A(\phi,\psi).
\]
The first term is independent of $(\phi,\psi)$, so minimizing the
approximation error is equivalent to minimizing $L=C-2A$.}
\end{proof}

\subsection{Proofs for Section~\ref{sec:method}}
\label{app:proofs_method}
\label{app:proofs_functional_calculus}

\begin{proof}[Proof of Proposition~\ref{prop:derivatives}]
For $A$, linearity in $\phi$ gives
\[
    A(\phi+\varepsilon h,\psi)
    =A(\phi,\psi)+\varepsilon\,\Ex{P_{X,Y}}{h(X)^\top\psi(Y)},
\]
so, differentiating at $\varepsilon=0$ and using the tower property to
condition on $X$,
\begin{align*}
    D_\phi A(\phi,\psi)[h]
    =\Ex{P_{X,Y}}{h(X)^\top\psi(Y)}
    =\Ex{P_X}{h(X)^\top\E{\psi(Y)\mid X}}.
\end{align*}
For $C$, expanding the square gives
\begin{align*}
    C(\phi+\varepsilon h,\psi)
    &=C(\phi,\psi)
    +2\varepsilon\,\Ex{P_X\otimes P_Y}{
        \bigl(\phi(X)^\top\psi(Y)\bigr)\bigl(h(X)^\top\psi(Y)\bigr)}\\
    &\qquad+\varepsilon^2\,\Ex{P_X\otimes P_Y}{\bigl(h(X)^\top\psi(Y)\bigr)^2},
\end{align*}
so differentiating at $\varepsilon=0$ gives
\[
    D_\phi C(\phi,\psi)[h]
    =2\,\Ex{P_X\otimes P_Y}{
        \bigl(\phi(X)^\top\psi(Y)\bigr)\bigl(h(X)^\top\psi(Y)\bigr)}.
\]
Since $\phi(X)^\top\psi(Y)$ and $h(X)^\top\psi(Y)$ are scalars,
\[
    \bigl(\phi(X)^\top\psi(Y)\bigr)\bigl(h(X)^\top\psi(Y)\bigr)
    =h(X)^\top\bigl(\psi(Y)\psi(Y)^\top\bigr)\phi(X),
\]
and taking expectations, using independence of $X$ and $Y$ under
$P_X\otimes P_Y$,
\begin{align*}
    D_\phi C(\phi,\psi)[h]
    =2\,\Ex{P_X}{h(X)^\top\Ex{P_Y}{\psi(Y)\psi(Y)^\top}\phi(X)}
    =2\Ex{P_X}{h(X)^\top\Sigma_\psi\phi(X)}.
\end{align*}
Consequently,
\[
    D_\phi L(\phi,\psi)[h]
    =
    2\Ex{P_X}{
        h(X)^\top\bigl(\Sigma_\psi\phi(X)-m_\psi(X)\bigr)}.
\]
The Riesz representation therefore yields
\[
    \nabla_\phi L(\phi,\psi)(x)
    =2\Sigma_\psi\phi(x)-2m_\psi(x).
\]
Interchanging $X,\phi$ with $Y,\psi$ gives the stated expression for
$\nabla_\psi L$.

The term $A$ is bilinear in $(\phi,\psi)$, so its diagonal second
derivatives vanish. Differentiating the first derivative of $C$ once
more in directions { $k\in L_0^2(P_X)^d$ and
$\ell\in L_0^2(P_Y)^d$} gives
\begin{align*}
    D_{\phi,\phi}C(\phi,\psi)[h,k]=2\Ex{P_X}{h(X)^\top\Sigma_\psi k(X)},
    \quad
    D_{\psi,\psi}C(\phi,\psi)[g,\ell]=2\Ex{P_Y}{g(Y)^\top\Sigma_\phi\ell(Y)}.
\end{align*}
These are also the diagonal blocks of $L$, and their associated
multiplication operators are $2\Sigma_\psi$ and $2\Sigma_\phi$,
respectively.
\end{proof}

\begin{proof}[Proof of Proposition~\ref{prop:balancing}]
\paragraph{Step 1: normalize the first factor.}
Start with
\[
    A_0=\Sigma_\phi^{-1/2},
    \qquad
    \phi^{(0)}=A_0\phi,
    \qquad
    \psi^{(0)}=A_0^{-\top}\psi
                 =\Sigma_\phi^{1/2}\psi.
\]
The inverse transformations preserve the pointwise product, and hence
$\phi^{(0)}(x)^\top\psi^{(0)}(y)=\phi(x)^\top\psi(y)$. The first
second-moment matrix is now the identity:
\[
    \Sigma_{\phi^{(0)}}=A_0\Sigma_\phi A_0^\top=I.
\]
The second one is generally not the identity; instead,
\[
    \Sigma_{\psi^{(0)}}
    =A_0^{-\top}\Sigma_\psi A_0^{-1}
    =\Sigma_\phi^{1/2}\Sigma_\psi\Sigma_\phi^{1/2}
    \eqqcolon B.
\]
Thus, this first transformation normalizes the $\phi$ factor while moving
all remaining scale and dependence between coordinates into $B$.

\paragraph{Step 2: diagonalize the second factor.}
Because $B$ is symmetric positive definite, write
\[
    B=U\Lambda U^\top,
    \qquad
    \Lambda=\operatorname{diag}(s_1^2,\ldots,s_d^2).
\]
Set
\[
    A_1=U^\top A_0,
    \qquad
    \phi^{(1)}=A_1\phi=U^\top\phi^{(0)},
    \qquad
    \psi^{(1)}=A_1^{-\top}\psi=U^\top\psi^{(0)}.
\]
Multiplication by $U^\top$ diagonalizes the second matrix without changing
the first, because $U$ is orthogonal:
\[
    \Sigma_{\phi^{(1)}}=U^\top I U=I,
    \qquad
    \Sigma_{\psi^{(1)}}=U^\top B U=\Lambda.
\]
The coordinates are therefore aligned with the spectral directions, but
the pair remains unbalanced: its second-moment matrices are $(I,\Lambda)$.

\paragraph{Step 3: balancing.}
To see how to balance the pair, first note that multiplying $\psi^{(1)}$ by
$\Lambda^{-1/2}$ normalizes its second-moment matrix:
\[
    \Sigma_{\Lambda^{-1/2}\psi^{(1)}}
    =\Lambda^{-1/2}
      \Sigma_{\psi^{(1)}}
      \Lambda^{-1/2}
    =\Lambda^{-1/2}\Lambda\Lambda^{-1/2}
    =I.
\]
Rather than placing this entire normalization in the $\psi$ factor, we split
it equally between the two factors while preserving their product:
\[
    \tilde\phi=\Lambda^{1/4}\phi^{(1)},
    \qquad
    \tilde\psi=\Lambda^{-1/4}\psi^{(1)}.
\]
Their product is still unchanged, while their second-moment matrices become
\[
    \Sigma_{\tilde\phi}
    =\Lambda^{1/4}I\Lambda^{1/4}
    =\Lambda^{1/2},
    \quad
    \Sigma_{\tilde\psi}
    =\Lambda^{-1/4}\Lambda\Lambda^{-1/4}
    =\Lambda^{1/2}.
\]
Combining the normalization, diagonalization, and balancing transformations
gives
\[
    \tilde\phi=A\phi,
    \qquad
    A=\Lambda^{1/4}A_1
     =\Lambda^{1/4}U^\top\Sigma_\phi^{-1/2}.
\]
The corresponding transformation of the other factor is
$\tilde\psi=A^{-\top}\psi$, as claimed. Since the pointwise product was
preserved at every step,
{ $\kp_{\tilde\phi,\tilde\psi}=\kp_{\phi,\psi}$} and therefore
$L(\tilde\phi,\tilde\psi)=L(\phi,\psi)$.

\paragraph{Orthonormalization and singular values.}
Let
$
    D=\Lambda^{1/2}=\operatorname{diag}(s_1,\ldots,s_d)
$
and define
$\phi^\perp=D^{-1/2}\tilde\phi$ and
$\psi^\perp=D^{-1/2}\tilde\psi$. Then
\[
    \Ex{P_X}{\phi^\perp(X)\phi^\perp(X)^\top}
    =
    \Ex{P_Y}{\psi^\perp(Y)\psi^\perp(Y)^\top}
    =I,
\]
so the coordinates of $\phi^\perp$ and $\psi^\perp$ form orthonormal systems.
Moreover,
\[
    \tilde\phi(x)^\top\tilde\psi(y)
    =\phi^\perp(x)^\top D\psi^\perp(y)
    =\sum_{i=1}^d
      s_i\phi_i^\perp(x)\psi_i^\perp(y).
\]
This is a singular-value decomposition of
{ $\kp_{\phi,\psi}$}. Finally,
$\Sigma_\phi\Sigma_\psi$ is similar to
$B=\Sigma_\phi^{1/2}\Sigma_\psi\Sigma_\phi^{1/2}$, whose eigenvalues are
$s_i^2$. Hence
$s_i=\sqrt{\lambda_i(\Sigma_\phi\Sigma_\psi)}$, completing the proof.
\end{proof}

\subsection{Proofs for Section~\ref{sec:theoretical_guarantees}}
\label{app:proofs_theoretical_guarantees}

\begin{proof}[Proof of Lemma~\ref{lem:gap_characterization}]
Fix $\phi=\phi_t$. By the definitions of $A$ and $C$ in
Proposition~\ref{prop:population_loss},
\begin{align*}
    C(\phi_t,\psi)
    &=
    \Ex{P_X\otimes P_Y}{
      \psi(Y)^\top\phi_t(X)\phi_t(X)^\top\psi(Y)
    }=
    \Ex{P_Y}{
      \psi(Y)^\top\Sigma_{\phi,t}\psi(Y)
    },\\
    A(\phi_t,\psi)
    &=
    \Ex{P_Y}{
      \psi(Y)^\top\E{\phi_t(X)\mid Y}
    }.
\end{align*}
The last identity follows from the tower property, conditioning the
joint expectation in $A$ on $Y$. Since
$m_{\phi,t}(Y)=\E{\phi_t(X)\mid Y}$, the block objective is therefore
\[
    L(\phi_t,\psi)
    =
    \Ex{P_Y}{
      \psi(Y)^\top\Sigma_{\phi,t}\psi(Y)
      -2\psi(Y)^\top m_{\phi,t}(Y)
    }.
\]
The exact best response
$\psi_{t+1}^*=\Sigma_{\phi,t}^{-1}m_{\phi,t}$ satisfies
$m_{\phi,t}=\Sigma_{\phi,t}\psi_{t+1}^*$. Therefore,
\begin{align*}
    &\psi^\top\Sigma_{\phi,t}\psi
      -2\psi^\top m_{\phi,t}=
      (\psi-\psi_{t+1}^*)^\top
      \Sigma_{\phi,t}
      (\psi-\psi_{t+1}^*)
      -(\psi_{t+1}^*)^\top
       \Sigma_{\phi,t}\psi_{t+1}^*,
\end{align*}
and the final term is independent of $\psi$. Substituting
$\psi=\psi_{t+1}^*$ shows that
$L(\phi_t,\psi_{t+1}^*)=-\Ex{P_Y}{(\psi_{t+1}^*(Y))^\top\Sigma_{\phi,t}\psi_{t+1}^*(Y)}$,
so, for every $\psi\in L_0^2(P_Y)^d$,
\[
    L(\phi_t,\psi)-L(\phi_t,\psi_{t+1}^*)
    =
    \|\psi-\psi_{t+1}^*\|_{\Sigma_{\phi,t}}^2.
\]
Taking $\psi=\psi_t$ gives $G_t^\psi=\|r_t^\psi\|_{\Sigma_{\phi,t}}^2$. The
identical calculation, fixing $\psi=\psi_{t+1}$ and using
$\phi_{t+1}^*=\Sigma_{\psi,t+1}^{-1}m_{\psi,t+1}$, gives, for every
$\phi\in L_0^2(P_X)^d$,
\[
    L(\phi,\psi_{t+1})-L(\phi_{t+1}^*,\psi_{t+1})
    =
    \|\phi-\phi_{t+1}^*\|_{\Sigma_{\psi,t+1}}^2,
\]
and, at $\phi=\phi_t$, $G_t^\phi=\|r_t^\phi\|_{\Sigma_{\psi,t+1}}^2$. This
proves the quadratic representation and its specialization.

For the sandwich bound, write
$G_t^\psi=\|\Sigma_{\phi,t}^{1/2}(\psi_t-\psi_{t+1}^*)\|_{L^2(P_Y)}^2$.
Assumption~\ref{ass:nondegenerate_factors} gives
$\lambda I\preceq\Sigma_{\phi,t}\preceq\Lambda I$, hence
\[
    \lambda\|\psi_t-\psi_{t+1}^*\|_{L^2(P_Y)}^2
    \le G_t^\psi\le
    \Lambda\|\psi_t-\psi_{t+1}^*\|_{L^2(P_Y)}^2,
\]
and analogously for $G_t^\phi$.

For the gradient identity, $m_{\phi,t}=\Sigma_{\phi,t}\psi_{t+1}^*$ and
Proposition~\ref{prop:derivatives} give
\[
    \nabla_\psi L(\phi_t,\psi_t)
    =2\Sigma_{\phi,t}\psi_t-2m_{\phi,t}
    =2\Sigma_{\phi,t}(\psi_t-\psi_{t+1}^*),
\]
and analogously
$\nabla_\phi L(\phi_t,\psi_{t+1})=2\Sigma_{\psi,t+1}(\phi_t-\phi_{t+1}^*)$.
For any positive-definite $\Sigma\preceq\Lambda I$, diagonalizing
$\Sigma=U\operatorname{diag}(\lambda_1,\ldots,\lambda_d)U^\top$ gives
$\lambda_i^2\le\Lambda\lambda_i$ for every $i$
(Assumption~\ref{ass:nondegenerate_factors}), so $\Sigma^2\preceq\Lambda\Sigma$
and $z^\top\Sigma^2z\le\Lambda z^\top\Sigma z$ for every vector $z$. Hence
\[
    \|\nabla_\psi L(\phi_t,\psi_t)\|_{L^2(P_Y)}^2
    =4\bigl\|\Sigma_{\phi,t}(\psi_t-\psi_{t+1}^*)\bigr\|_{L^2(P_Y)}^2
    \le4\Lambda G_t^\psi,
\]
and analogously
$\|\nabla_\phi L(\phi_t,\psi_{t+1})\|_{L^2(P_X)}^2\le4\Lambda G_t^\phi$.
\end{proof}

\begin{proof}[Proof of Theorem~\ref{thm:weak_learner_convergence}]
Assumption~\ref{ass:nondegenerate_factors} gives
$
    \lambda I\preceq\Sigma_{\phi,t}\preceq\Lambda I,
    \,
    \lambda I\preceq\Sigma_{\psi,t+1}\preceq\Lambda I,
$
whereas Assumption~\ref{ass:relative_tree_accuracy} gives
\begin{align*}
    \|r_t^\psi-h_t^\psi\|_{\Sigma_{\phi,t}}^2
    \le(1-\gamma_\psi^2)
      \|r_t^\psi\|_{\Sigma_{\phi,t}}^2,\qquad \|r_t^\phi-h_t^\phi\|_{\Sigma_{\psi,t+1}}^2
    \le(1-\gamma_\phi^2)
      \|r_t^\phi\|_{\Sigma_{\psi,t+1}}^2.
\end{align*}

By Lemma~\ref{lem:gap_characterization}, for every
$\psi\in L_0^2(P_Y)^d$,
\[
    L(\phi_t,\psi)-L(\phi_t,\psi_{t+1}^*)
    =
    \|\psi-\psi_{t+1}^*\|_{\Sigma_{\phi,t}}^2,
\]
and, in particular, $G_t^\psi=\|r_t^\psi\|_{\Sigma_{\phi,t}}^2$.

We next compare the approximate update with the exact block minimizer. The
definitions of $r_t^\psi$ and $h_t^\psi$ give
\begin{align*}
    \psi_{t+1}-\psi_{t+1}^*
    &=\psi_t+\eta_\psi h_t^\psi-\psi_{t+1}^*=-r_t^\psi+\eta_\psi h_t^\psi=-(1-\eta_\psi)r_t^\psi
      +\eta_\psi(h_t^\psi-r_t^\psi).
\end{align*}
Since $\eta_\psi\in(0,1]$, the triangle inequality and
    Assumption~\ref{ass:relative_tree_accuracy} give
\begin{align*}
    \|\psi_{t+1}-\psi_{t+1}^*\|_{\Sigma_{\phi,t}}
    &\le
    (1-\eta_\psi)\|r_t^\psi\|_{\Sigma_{\phi,t}}
    +\eta_\psi
      \|h_t^\psi-r_t^\psi\|_{\Sigma_{\phi,t}}\\
    &\le
    \left(
      1-\eta_\psi
      +\eta_\psi\sqrt{1-\gamma_\psi^2}
    \right)
    \|r_t^\psi\|_{\Sigma_{\phi,t}}=q_\psi\|r_t^\psi\|_{\Sigma_{\phi,t}}.
\end{align*}
Moreover, $\gamma_\psi>0$ implies
$\sqrt{1-\gamma_\psi^2}<1$. Thus
$q_\psi=(1-\eta_\psi)+\eta_\psi\sqrt{1-\gamma_\psi^2}<1$.
Squaring the preceding inequality and using the quadratic representation
with $\psi=\psi_{t+1}$ yields
\[
    L(\phi_t,\psi_{t+1})-L(\phi_t,\psi_{t+1}^*)
    \le q_\psi^2G_t^\psi.
\]

Now fix $\psi=\psi_{t+1}$; Lemma~\ref{lem:gap_characterization}
gives, for every $\phi\in L_0^2(P_X)^d$,
\[
    L(\phi,\psi_{t+1})
      -L(\phi_{t+1}^*,\psi_{t+1})
    =
    \|\phi-\phi_{t+1}^*\|_{\Sigma_{\psi,t+1}}^2,
\]
and, in particular, $G_t^\phi=\|r_t^\phi\|_{\Sigma_{\psi,t+1}}^2$.
The approximate $\phi$-update satisfies
\begin{align*}
    \phi_{t+1}-\phi_{t+1}^*
    &=\phi_t+\eta_\phi h_t^\phi-\phi_{t+1}^*=-(1-\eta_\phi)r_t^\phi
      +\eta_\phi(h_t^\phi-r_t^\phi).
\end{align*}
Applying the triangle inequality and the $\phi$-accuracy condition exactly as
for the first block yields
\[
    \|\phi_{t+1}-\phi_{t+1}^*\|_{\Sigma_{\psi,t+1}}
    \le q_\phi
    \|r_t^\phi\|_{\Sigma_{\psi,t+1}}.
\]
After squaring and using the quadratic identity, we obtain
\[
    L(\phi_{t+1},\psi_{t+1})
      -L(\phi_{t+1}^*,\psi_{t+1})
    \le q_\phi^2G_t^\phi.
\]
Indeed, $\gamma_\phi>0$ gives
$\sqrt{1-\gamma_\phi^2}<1$, and hence
\[
    q_\phi
    =(1-\eta_\phi)+\eta_\phi\sqrt{1-\gamma_\phi^2}
    <(1-\eta_\phi)+\eta_\phi=1.
\]

We can now quantify the loss decrease. For the first block,
\begin{align*}
    &L(\phi_t,\psi_t)-L(\phi_t,\psi_{t+1})=
      \underbrace{
        L(\phi_t,\psi_t)-L(\phi_t,\psi_{t+1}^*)
      }_{G_t^\psi}
      -
      \underbrace{
        \left[
          L(\phi_t,\psi_{t+1})-L(\phi_t,\psi_{t+1}^*)
        \right]
      }_{\le q_\psi^2G_t^\psi}\\
    &\quad\ge (1-q_\psi^2)G_t^\psi.
\end{align*}
Analogously, the second block satisfies
\[
    L(\phi_t,\psi_{t+1})
      -L(\phi_{t+1},\psi_{t+1})
    \ge(1-q_\phi^2)G_t^\phi.
\]
Adding these two inequalities proves
\[
    L(\phi_t,\psi_t)-L(\phi_{t+1},\psi_{t+1})
    \ge
    (1-q_\psi^2)G_t^\psi+(1-q_\phi^2)G_t^\phi.
\]

Both coefficients are strictly positive, so the
loss is nonincreasing. It
is also bounded below: Proposition~\ref{prop:population_loss} gives
\[
    L(\phi,\psi)
    =
    \|\kp_0-\kp_{\phi,\psi}\|_{L^2(P_X\otimes P_Y)}^2
    -\|\kp_0\|_{L^2(P_X\otimes P_Y)}^2
    \ge -\|\kp_0\|_{L^2(P_X\otimes P_Y)}^2.
\]
It follows that $L(\phi_t,\psi_t)$ converges to a finite limit
$L_\infty$. Write $L_t\coloneqq L(\phi_t,\psi_t)$ and
$S_t\coloneqq G_t^\phi+G_t^\psi$. Since $a=\min\{1-q_\phi^2,1-q_\psi^2\}$,
the per-iteration decrease established above gives
\[
    aS_t
    \le
    (1-q_\psi^2)G_t^\psi+(1-q_\phi^2)G_t^\phi
    \le
    L_t-L_{t+1}.
\]

Summing this inequality from $t=0$ to $T-1$ telescopes to
$a\sum_{t<T}S_t\le L_0-L_T\le L_0-L_\infty$; since $S_t\ge0$,
$T\min_{t<T}S_t\le\sum_{t<T}S_t$, giving the first rate. For the gradient
rate, the gradient bound of Lemma~\ref{lem:gap_characterization}
gives, at $t_*\in\operatorname*{arg\,min}_{0\le t<T}S_t$,

\begin{align*}
    &\|\nabla_\phi L(\phi_{t_*},\psi_{t_*+1})\|_{L^2(P_X)}^2
    +\|\nabla_\psi L(\phi_{t_*},\psi_{t_*})\|_{L^2(P_Y)}^2\\
    &\qquad\le4\Lambda S_{t_*}
    =4\Lambda\min_{0\le t<T}S_t
    \le\frac{4\Lambda(L(\phi_0,\psi_0)-L_\infty)}{aT}.
\end{align*}

which proves the second rate, since the minimum of the gradient sum over
$0\le t<T$ cannot exceed its value at $t_*$.

Since $L_t\to L_\infty$, $L_t-L_{t+1}\to0$, so $0\le aS_t\le
L_t-L_{t+1}\to0$ forces $S_t\to0$ for the full sequence, i.e.
$G_t^\psi\to0$ and $G_t^\phi\to0$; by the gradient bound of
Lemma~\ref{lem:gap_characterization} again, the corresponding
gradients $\nabla_\psi L(\phi_t,\psi_t)$ and
$\nabla_\phi L(\phi_t,\psi_{t+1})$ vanish as well.

 Finally, suppose the iterates
$(\phi_t,\psi_t)\to(\phi_\infty,\psi_\infty)$ in product $L^2$. Then
$
    \phi_t\to\phi_\infty
$in $L^2(P_X)^d,$
    \,
    $\psi_t\to\psi_\infty$
    in $L^2(P_Y)^d.$
{ The shifted sequence also satisfies
$\psi_{t+1}\to\psi_\infty$ in $L^2(P_Y)^d$.}
We verify separately that every quantity appearing in the two gradient
formulas converges.

{ First consider the conditional mean terms. Set
$\delta_{\phi,t}\coloneqq\phi_t-\phi_\infty$. From the definition of
$m_\phi$,
\[
    m_{\phi,t}(Y)-m_{\phi,\infty}(Y)
    =
    \E{
      \delta_{\phi,t}(X)
      \,\middle|\,Y
    }.
\]
Conditional expectation is an orthogonal projection in $L^2$ and therefore
a contraction, so
\begin{align*}
    \|m_{\phi,t}-m_{\phi,\infty}\|_{L^2(P_Y)}
    &\le
    \|\delta_{\phi,t}\|_{L^2(P_X)}
    \longrightarrow0.
\end{align*}
{ Applying the same argument to
$\delta_{\psi,t+1}\coloneqq\psi_{t+1}-\psi_\infty$ gives}
\[
    \|m_{\psi,t+1}-m_{\psi,\infty}\|_{L^2(P_X)}
    \le
    { \|\psi_{t+1}-\psi_\infty\|_{L^2(P_Y)}}
    \longrightarrow0.
\]}

Next consider the second-moment matrices. For every pair of coordinates
$i,j$, adding and subtracting
$\Ex{P_X}{\phi_{\infty,i}(X)\phi_{t,j}(X)}$ and applying
Cauchy--Schwarz gives
\begin{align*}
    &\left|
      \Ex{P_X}{\phi_{t,i}(X)\phi_{t,j}(X)}
      -
      \Ex{P_X}{\phi_{\infty,i}(X)\phi_{\infty,j}(X)}
    \right|\\
    &\quad\le
      \|\phi_{t,i}-\phi_{\infty,i}\|_{L^2(P_X)}
      \|\phi_{t,j}\|_{L^2(P_X)}
      +
      \|\phi_{\infty,i}\|_{L^2(P_X)}
      \|\phi_{t,j}-\phi_{\infty,j}\|_{L^2(P_X)},
\end{align*}
and both terms tend to zero: the differences converge to zero in $L^2$,
whereas the convergent sequence $(\phi_{t,j})_t$ is bounded in $L^2$.
Thus every entry of $\Sigma_{\phi,t}$ converges to the corresponding entry
of $\Sigma_{\phi,\infty}$. Since these are finite $d\times d$ matrices,
entrywise convergence implies convergence in operator norm:
\[
    \|\Sigma_{\phi,t}-\Sigma_{\phi,\infty}\|_{\mathrm{op}}
    \longrightarrow0, \qquad \|\Sigma_{\psi,t+1}-\Sigma_{\psi,\infty}\|_{\mathrm{op}}
    \longrightarrow0
\]
In particular, both sequences of matrix operator norms are bounded.

We can now pass to the limit in each gradient formula. For the
$\psi$-gradient,
\begin{align*}
    &\|
      \Sigma_{\phi,t}\psi_t
      -\Sigma_{\phi,\infty}\psi_\infty
    \|_{L^2(P_Y)}\\
    &\quad\le
      \|\Sigma_{\phi,t}\|_{\mathrm{op}}
      \|\psi_t-\psi_\infty\|_{L^2(P_Y)}
      +
      \|\Sigma_{\phi,t}-\Sigma_{\phi,\infty}\|_{\mathrm{op}}
      \|\psi_\infty\|_{L^2(P_Y)}
      \longrightarrow0.
\end{align*}
Combining this with
$m_{\phi,t}\to m_{\phi,\infty}$ and using
Proposition~\ref{prop:derivatives}, we obtain
\[
    \nabla_\psi L(\phi_t,\psi_t)
    \longrightarrow
    \nabla_\psi L(\phi_\infty,\psi_\infty)
    \quad\text{in }L^2(P_Y)^d.
\]
Similarly,
\[
\|
      \Sigma_{\psi,t+1}\phi_t
      -\Sigma_{\psi,\infty}\phi_\infty
    \|_{L^2(P_X)}
      \longrightarrow0,
\]
and $m_{\psi,t+1}\to m_{\psi,\infty}$. Therefore,
\[
    { \nabla_\phi L(\phi_t,\psi_{t+1})}
    \longrightarrow
    \nabla_\phi L(\phi_\infty,\psi_\infty)
    \quad\text{in }L^2(P_X)^d.
\]
We already proved that the norms of the two gradient sequences on the
left converge to zero. Their $L^2$ limits must consequently be the zero
functions. Hence
\[
    \nabla_\psi L(\phi_\infty,\psi_\infty)=0,
    \qquad
    \nabla_\phi L(\phi_\infty,\psi_\infty)=0,
\]
so the limit is stationary.

For exact regressions, $\gamma_\phi=\gamma_\psi=1$. Therefore
$q_\phi=1-\eta_\phi$ and $q_\psi=1-\eta_\psi$, and
\[
    1-q_\phi^2=\eta_\phi(2-\eta_\phi),
    \qquad
    1-q_\psi^2=\eta_\psi(2-\eta_\psi).
\]
Taking the minimum of these two quantities gives the final expression for
$a$.
\end{proof}

\begin{proof}[Proof of Theorem~\ref{thm:population_landscape}]
{ Since the factors belong to the centered spaces, their conditional
means are the actions of the centered operator and its adjoint:}
\[
    m_\psi(x)=\E{\psi(Y)\mid X=x}=(\CEO_0\psi)(x),
\]
and, by the definition of the adjoint conditional expectation operator,
\[
    m_\phi(y)=\E{\phi(X)\mid Y=y}=(\CEO_0^*\phi)(y).
\]
At a stationary point, both gradients in
Proposition~\ref{prop:derivatives} vanish. The first gradient equation
therefore gives
\[
    0=2\Sigma_\psi\phi-2m_\psi
    \quad\Longrightarrow\quad
    \CEO_0\psi=\Sigma_\psi\phi,
\]
while the second gives
\[
    0=2\Sigma_\phi\psi-2m_\phi
    \quad\Longrightarrow\quad
    \CEO_0^*\phi=\Sigma_\phi\psi.
\]
Since $\Sigma_\phi=\Sigma_\psi=D$, these equations become
\[
    \CEO_0\psi=D\phi,
    \qquad
    \CEO_0^*\phi=D\psi.
\]
Because $D\succ0$, the normalized factors are well defined and
$\phi=D^{1/2}\phi^\perp$,
$\psi=D^{1/2}\psi^\perp$. The operators $\CEO_0$ and $\CEO_0^*$ act on
each coordinate and the constant matrix $D^{1/2}$ can be taken through
them. Thus
\[
    D^{1/2}\CEO_0\psi^\perp
    =D^{3/2}\phi^\perp,
    \quad
    D^{1/2}\CEO_0^*\phi^\perp
    =D^{3/2}\psi^\perp.
\]
Multiplication by $D^{-1/2}$ gives
\[
    \CEO_0\psi^\perp=D\phi^\perp,
    \qquad
    \CEO_0^*\phi^\perp=D\psi^\perp.
\]
In coordinates, these identities read
\[
    \CEO_0\psi_i^\perp=s_i\phi_i^\perp,
    \qquad
    \CEO_0^*\phi_i^\perp=s_i\psi_i^\perp,
    \qquad i=1,\ldots,d.
\]
Moreover, balancing gives
\[
    \Sigma_{\phi^\perp}
    =D^{-1/2}\Sigma_\phi D^{-1/2}=I,
    \quad
    \Sigma_{\psi^\perp}
    =D^{-1/2}\Sigma_\psi D^{-1/2}=I.
\]
Hence the coordinates of $\phi^\perp$ and $\psi^\perp$ are orthonormal
in $L^2(P_X)$ and $L^2(P_Y)$, respectively. The preceding coordinate
equations therefore show that
$(\phi_i^\perp,\psi_i^\perp,s_i)$ is a singular triplet of $\CEO_0$ for
every $i$.

It remains to determine which of these stationary points can be local
minima. The preceding argument shows that every balanced, nondegenerate
stationary point selects $d$ singular triplets of $\CEO_0$, but
stationarity alone does not guarantee that they are the leading ones.

The idea is clearest when $d=1$. In this case,
\[
    L(\phi,\psi)
    =\norm{\phi}_{L^2(P_X)}^2\norm{\psi}_{L^2(P_Y)}^2
      -2\langle\phi,\CEO_0\psi\rangle_{L^2(P_X)}.
\]
Suppose that the stationary point corresponds to a singular triplet
$(u,v,s)$, so that $\phi=\sqrt{s}u$ and $\psi=\sqrt{s}v$. If this triplet
is not leading, there is an orthogonal singular triplet
$(u^\star,v^\star,\sigma)$ with $\sigma>s$. For $\varepsilon\in\R$, define
\[
    \phi_\varepsilon=\sqrt{s}u+\varepsilon u^\star,
    \qquad
    \psi_\varepsilon=\sqrt{s}v+\varepsilon v^\star.
\]
These factors are centered and converge to $(\phi,\psi)$ in product $L^2$
as $\varepsilon\to0$. Orthogonality and the singular equations give
\[
    \norm{\phi_\varepsilon}_{L^2(P_X)}^2
    =\norm{\psi_\varepsilon}_{L^2(P_Y)}^2=s+\varepsilon^2,
    \qquad
    \langle\phi_\varepsilon,\CEO_0\psi_\varepsilon
    \rangle_{L^2(P_X)}=s^2+\sigma\varepsilon^2.
\]
Consequently,
\begin{align*}
    L(\phi_\varepsilon,\psi_\varepsilon)-L(\phi,\psi)
    &=(s+\varepsilon^2)^2
      -2(s^2+\sigma\varepsilon^2)+s^2\\
    &=2(s-\sigma)\varepsilon^2+\varepsilon^4<0
\end{align*}
whenever $0<\varepsilon^2<2(\sigma-s)$. Hence a nonleading singular
triplet cannot be a local minimum.

The same idea extends directly to $d>1$.

\end{proof}

\subsection{Finite-sample proof for
Section~\ref{sec:finite_sample_guarantee}}
\label{app:finite_sample_proofs}

\begin{proof}[Proof of Theorem~\ref{thm:finite_sample_generalization}]
First, we justify the convex-hull containment used in
Section~\ref{sec:finite_sample_guarantee}. For any scalar coordinate, write
the damped update as
\[
    f_{t+1}=(1-\eta_t)f_t+\eta_t h_t,
    \qquad \eta_t\in[0,1],
\]
where $f_0,h_0,\ldots,h_{T-1}\in\mathcal H_Z$. Expanding the recursion gives
\[
    f_T=
    \left\{\prod_{s=0}^{T-1}(1-\eta_s)\right\}f_0
    +\sum_{t=0}^{T-1}
      \left\{\eta_t\prod_{s=t+1}^{T-1}(1-\eta_s)\right\}h_t.
\]
All coefficients are nonnegative and sum to one, so
$f_T\in\operatorname{conv}(\mathcal H_Z)=\mathcal F_Z$.

Rademacher complexity is unchanged by taking a convex hull \citep{bartlett2002rademacher}, so
$
    \mathfrak R_n(\mathcal F_X)=\mathfrak R_n(\mathcal H_X),
    \,
    \mathfrak R_n(\mathcal F_Y)=\mathfrak R_n(\mathcal H_Y).
$
Moreover, every function in
$\mathcal F_X\cup\mathcal F_Y$ is bounded by $B$. To make the product-class
step explicit, define
\[
    \mathcal G_Z^{(2)}
    \coloneqq
    \{z\mapsto f(z)\widetilde f(z):
      f,\widetilde f\in\mathcal F_Z\},
    \qquad Z\in\{X,Y\},
\]
and
\[
    \mathcal G_{XY}^{(\times)}
    \coloneqq
    \{(x,y)\mapsto f(x)g(y):
      f\in\mathcal F_X,\ g\in\mathcal F_Y\}.
\]
Using
$uv=\{(u+v)^2-(u-v)^2\}/4$, each product class is a difference of two
squared sum classes. The scalar square map is $4B$-Lipschitz on
$[-2B,2B]$, so the scalar contraction inequality, together with
$\mathfrak R_n(\mathcal F+\widetilde{\mathcal F})
\le\mathfrak R_n(\mathcal F)+\mathfrak R_n(\widetilde{\mathcal F})$,
gives, for a universal constant $c_0$,
\begin{align*}
    \mathfrak R_n(\mathcal G_Z^{(2)})
    &\le c_0B\mathfrak R_n(\mathcal H_Z),\\
    \mathfrak R_n(\mathcal G_{XY}^{(\times)})
    &\le c_0B\bigl\{
      \mathfrak R_n(\mathcal H_X)+
      \mathfrak R_n(\mathcal H_Y)\bigr\}.
\end{align*}
Here we used the preceding convex-hull identity in both bounds. Applying the
standard Rademacher uniform law to the two first-moment classes and the three
product classes, followed by a union bound, yields a quantity
\[
    \varepsilon_n
    \le C_B\Delta_n(\delta)
\]
such that, with probability at least $1-\delta$, simultaneously for
$Z\in\{X,Y\}$,
\begin{align*}
    \sup_{f\in\mathcal F_Z}
    |(P_{n,Z}-P_Z)f|
    &\le\varepsilon_n,\\
    \sup_{f,\widetilde f\in\mathcal F_Z}
    |(P_{n,Z}-P_Z)(f\widetilde f)|
    &\le\varepsilon_n,\\
    \sup_{\substack{f\in\mathcal F_X\\g\in\mathcal F_Y}}
    |(P_n-P)f(X)g(Y)|
    &\le\varepsilon_n.
\end{align*}
The bounded-difference term in this uniform law is of order
$B^2\sqrt{\log(8d^2/\delta)/n}$ for the product classes and of order
$B\sqrt{\log(8d^2/\delta)/n}$ for the first-moment classes; both are absorbed
into $C_B\Delta_n(\delta)$. See, for example,
\citet[Chapter~4]{wainwright2019high}. Here $P_{n,Z}$ and $P_Z$ denote the
empirical and population marginal measures, while $P_n$ and $P$ denote the
empirical and population joint measures.

We next propagate these scalar bounds through the objective. For any
$\phi\in\mathcal F_X^d$, every entry of
$\widehat\Sigma_\phi-\Sigma_\phi$ has absolute value at most
$\varepsilon_n$, and the same holds for $\psi$. All empirical and population
second-moment entries have absolute value at most $B^2$. Therefore,
writing out the trace entrywise,
\begin{align*}
    \left|
      \tr{\widehat\Sigma_\phi\widehat\Sigma_\psi}
      -\tr{\Sigma_\phi\Sigma_\psi}
    \right|
    &\le
    \sum_{j,k=1}^d
      |\widehat\Sigma_{\phi,jk}-\Sigma_{\phi,jk}|
        |\widehat\Sigma_{\psi,kj}|
    \\
    &\quad+
    \sum_{j,k=1}^d
      |\Sigma_{\phi,jk}|
      |\widehat\Sigma_{\psi,kj}-\Sigma_{\psi,kj}|
    \le2d^2B^2\varepsilon_n.
\end{align*}
Since the classes are centered, $P_Xf=P_Yg=0$ for every
$f\in\mathcal F_X$ and $g\in\mathcal F_Y$. The first-moment bound thus gives
$|\overline\phi_{n,j}|,|\overline\psi_{n,j}|\le\varepsilon_n$; also, each
empirical mean has absolute value at most $B$. Hence
\[
    2|\overline\phi_n^\top\overline\psi_n|
    \le2dB\varepsilon_n.
\]
Finally, applying the joint-product bound coordinatewise yields
\[
    \left|
      \frac1n\sum_{i=1}^n\phi(X_i)^\top\psi(Y_i)
      -\E{\phi(X)^\top\psi(Y)}
    \right|
    \le d\varepsilon_n.
\]
Combining the last three displays, including the coefficient $2$ on the
joint term in the loss, gives uniformly over the vector-valued classes
\[
    |\widehat L_n(\phi,\psi)-L(\phi,\psi)|
    \le
    \bigl(2d^2B^2+2dB+2d\bigr)\varepsilon_n
    \le C_{d,B}\Delta_n(\delta),
\]
which proves the uniform bound. 

\end{proof}

\section{Experimental details}
\label{app:experiments}

Full implementations, exact hyperparameter grids, random seeds, and data
splits for every experiment are provided as section-specific notebooks in the
repository at \url{https://github.com/thiagorr162/fsnm}. Each notebook writes
its figures only to the code repository; figures included in the paper are
copied to the TeX directory after the reported outputs have been checked.
The repository README lists the complete configurations and baseline grids.
Unless stated otherwise, FSNM uses coordinatewise regression trees, ridge
$10^{-8}$ in the empirical Newton matrices, and empirical balancing after
every iteration.

\paragraph{Synthetic kernel recovery.}
The close-spectrum experiment uses $10{,}000/4{,}000$
training/validation pairs, rank $3$, step size $0.1$, depth $3$, and minimum
leaf size $300$; validation selects $11$ of $40$ candidate iterations, and
errors use a $160\times160$ grid. The regional and twenty-dimensional
experiments use $8{,}000/3{,}000$ pairs, rank $3$, step size $0.15$, depth
$3$, and at most $40$ iterations. Their minimum leaf sizes are $120$ and
$250$, and validation selects iterations $15$ and $32$, respectively. The
regional evaluation uses a $240\times240$ grid; the tabular evaluation uses
$10{,}000$ fresh product-marginal pairs.

\paragraph{Baselines and repeated samples.}
Tables~\ref{tab:rank3_baselines}--\ref{tab:tabular_baselines} use ten
independent draws. ACE uses the same tree budgets as FSNM and selects its
alternating iterations by validation correlation. uLSIF selects Gaussian
bandwidth and ridge by held-out squared loss. Kernel CCA uses $800$ random
landmarks and selects bandwidth and regularization by validation canonical
correlation. Reported intervals are
$\overline m\pm t_{9,0.975}s/\sqrt{10}$; exact grids and replicate seeds are
listed in the README.

\paragraph{Synthetic conditional CDF.}
For Section~\ref{sec:conditional_cdf_uncertainty}, every bootstrap replicate
resamples the $10{,}000$ paired training observations, refits the rank-three
model with the selected configuration, and evaluates
$n^{-1}\sum_i\widehat\kp(x,Y_i)\Ind{Y_i\leq t}$ on a grid of $301$
thresholds. The complete signed curve is then projected by anchored isotonic
regression onto the set of monotone $[0,1]$-valued curves. Pointwise bands are
the $2.5\%$ and $97.5\%$ quantiles across $100$ such projected replicates.

\paragraph{Dependence detection.}
Both settings use $10{,}000/4{,}000/4{,}000$
training/validation/test observations, rank $3$, step size $0.1$, depth $3$,
and minimum leaf size $300$. Validation selects one iteration under
independence and $13$ under nonlinear dependence. The held-out statistic is
minus the empirical FSNM loss; its null distribution uses $999$ permutations
of the fitted $Y$-factors and the plus-one correction for the $p$-value.

\paragraph{Restricted glass family.}
For Section~\ref{sec:glass_experiment}, we retain a record only when RI lies
in $[1,4.5]$, at least one of the ten selected oxide fractions is positive,
and the sum of the absolute fractions of every unselected oxide is below
$10^{-10}$. The selected fractions are renormalized to sum to one. A fixed
permutation with seed $2026$ gives $2{,}109/452/452$
training/validation/test observations. The validation grid contains ranks
$\{5,10,20\}$, tree depths $\{3,5\}$, and minimum leaf sizes
$\{25,50,100\}$; all fits use step size $0.1$, at most $80$ iterations, and
patience $10$. Both validation loss and CRPS select rank $20$, depth $5$,
leaf size $25$, and iteration $39$. Downstream expectations use the training
responses as the empirical marginal and apply the fitted kernel directly to
each query. Complete CDF curves are projected by isotonic regression, after
which quantile levels are recalibrated from validation PIT values; the test
partition is not used in either operation.

\end{document}